\pdfoutput=1  

\documentclass[11pt]{article}

\usepackage[letterpaper,margin=1in]{geometry}

\usepackage[T1]{fontenc}
\usepackage{graphicx}
\usepackage{amsmath,amssymb}
\usepackage{amsthm}
\usepackage{bm}
\usepackage{booktabs}
\usepackage{multirow}
\usepackage{algorithm}
\usepackage{algpseudocode}
\usepackage{microtype}
\usepackage{caption}
\usepackage{subcaption}
\usepackage{xcolor}
\usepackage{enumitem}
\usepackage{cite}
\definecolor{linkblue}{rgb}{0.12,0.49,0.85}
\usepackage[colorlinks=true,linkcolor=linkblue,citecolor=linkblue,
            urlcolor=linkblue,breaklinks=true]{hyperref}
\usepackage[accsupp]{axessibility}   

\newcommand{\JT}{J^{\!\top}}
\newcommand{\LT}{L^{\!\top}}
\newcommand{\WT}{W^{\!\top}}
\newcommand{\sigp}{\sigma^{\prime}}
\newcommand{\eps}{\boldsymbol{\varepsilon}}
\newcommand{\norm}[1]{\left\lVert#1\right\rVert}

\theoremstyle{plain}
\newtheorem{observation}{Observation}
\newtheorem{proposition}{Proposition}
\newtheorem{corollary}{Corollary}
\theoremstyle{remark}
\newtheorem{remark}{Remark}

\begin{document}

\title{Weight Feedback Computes the Jacobian Transpose Locally\\
in Modern Deep Networks}

\author{
  Junlong Shen\textsuperscript{1,2} \qquad
  Xingyu Li\textsuperscript{1,2}\thanks{Corresponding author:
    \texttt{xingyu@ualberta.ca}.}\\[4pt]
  \normalsize\textsuperscript{1}Department of Electrical and Computer Engineering,
  University of Alberta, Edmonton, Canada\\
  \normalsize\textsuperscript{2}Alberta Machine Intelligence Institute (Amii),
  Edmonton, Canada\\[2pt]
  \normalsize\texttt{junlong6@ualberta.ca, xingyu@ualberta.ca}
}
\date{}

\maketitle

\begingroup
\renewcommand\thefootnote{}%
\footnotetext{Preprint. Accepted to ECCV~2026; the definitive version will appear
in the conference proceedings.}%
\endgroup

\begin{abstract}
Predictive Coding (PC) offers a biologically motivated alternative to
backpropagation via local weight updates, yet routing error between layers
still relies on an autograd Jacobian-transpose ($\JT$) product---the last
non-local operation in PC.
We show that this dependency is largely avoidable.
For any layer $f(x){=}\mathrm{Act}(\mathrm{Norm}(L(x)))$ with frozen
normalization statistics, the exact $\JT$ \emph{factors} into three locally
available terms,
$\JT v = \LT(\mathbf{s} \odot \sigp(\mathrm{pre\text{-}act}) \odot v)$,
where $\sigp$ is the activation derivative and
$\mathbf{s}{=}\gamma/\sigma_{\mathrm{run}}$ is the normalization gain.
Prior weight-feedback methods omitted both corrections; restoring them closes
the transport gap for this layer class.
Note that locality here holds \emph{up to} three
assumptions, which we state upfront---weight symmetry ($\LT$ mirrors the
forward operator, as assumed by all PC), a soft spectral-norm control that is
not synapse-local, and a nearest-neighbour approximation for MaxPool.
Substituting the identity into PC yields WF-Act-PC, which removes the autograd
backward pass from error transport.
On CIFAR-10/100 (50 epochs, 5 seeds), WF-Act-PC is the only PC method whose
accuracy \emph{improves} with depth, surpassing iPC---the strongest classical
PC baseline---by 2.7--22.3\,pp on CIFAR-10.
With both methods tuned per architecture, it matches or exceeds a
comparably-tuned backpropagation baseline on the deeper CIFAR-10 architectures
(VGG-9: 93.57\% vs.\ 92.43\%; ResNet-18: 92.76\% vs.\ 91.54\%) and on the
harder Tiny-ImageNet benchmark, while trailing tuned BP on the deeper
CIFAR-100 VGG cells.
Our WF-Act-PC implementation is publicly available at
\url{https://github.com/jlshen025/pcax}.

\medskip
\noindent\textbf{Keywords:} Credit assignment $\cdot$ Bio-plausible learning
$\cdot$ Predictive coding $\cdot$ Feedback alignment $\cdot$ Jacobian transpose
$\cdot$ Weight symmetry
\end{abstract}

\section{Introduction}
\label{sec:intro}

Backpropagation~\cite{rumelhart1986learning} computes weight gradients by
propagating error signals through the Jacobian transpose~$\JT$ of each layer.
Two properties of this procedure lack established neural
correlates~\cite{crick1989recent}: feedback synaptic weights must mirror
feedforward weights exactly (\emph{weight symmetry}), and error signals must
propagate globally before any weight updates (\emph{update locking}).
Predictive Coding (PC)~\cite{rao1999predictive,friston2005theory} resolves
update locking through local weight updates at each
layer~\cite{whittington2017approximation,salvatori2024stable}, making it
among the most promising bio-plausible learning frameworks.
Yet computing $\JT$ during error transport---the operation that routes
credit between layers---still requires autograd vector-Jacobian products
(VJPs)~\cite{millidge2022predictive}: the last non-local operation in PC.
Feedback Alignment (FA)~\cite{lillicrap2016random} takes the opposite
approach, replacing $\JT$ with a fixed random matrix $B$, but fails on deep
convolutional networks~\cite{bartunov2018assessing,launay2020direct}.
Computing exact error signals without a global backward pass---the
$\JT$ transport problem---has accordingly been regarded as an inherent
limitation of weight-feedback architectures.

We revisit this view.
For any layer of the form $f(x) = \mathrm{Act}(\mathrm{Norm}(L(x)))$---the
standard building block of modern deep networks---weight feedback augmented
with two locally computable corrections recovers the \emph{exact} $\JT$
(Fig.~\ref{fig:method}):
\begin{equation}
  \JT \cdot v \;=\;
    \LT\!\bigl(\,\mathbf{s} \odot
    \sigp(\mathrm{pre\text{-}act}) \odot v\,\bigr),
  \label{eq:main_result}
\end{equation}
where $\LT$ is the transpose of the linear operator (the symmetric synapse),
$\sigp(\mathrm{pre\text{-}act})$ is the pointwise activation derivative
(analogous to an STDP eligibility trace), and
$\mathbf{s} = \gamma/\sigma_{\mathrm{run}}$ is the per-channel normalization
gain (analogous to homeostatic divisive
normalization~\cite{carandini2012normalization}).
The result is elementary---three applications of the chain rule
(Observation~\ref{thm:wf}); what matters is the \emph{locality} of the factors,
not the derivation.
The key point is that frozen normalization---running normalization layers
in eval mode during the inner loop, a natural choice for inference-loop
stability---collapses
$J_{\mathrm{Norm}}$ to a fixed diagonal, reducing the full $\JT$ to
purely local factors.

\medskip\noindent\textbf{Scope and assumptions.}
To clarify, the proposed method makes precise what ``local'' does and does
not mean here.
WF-Act-PC computes $\JT$ from local factors \emph{up to} three assumptions:
(i)~\emph{weight symmetry}---$\LT$ is taken to mirror the feedforward operator
$L$, the same assumption PC (and BP) already make; (ii)~a \emph{soft
spectral-norm control} applied to weights for
stability, which uses power iteration and is therefore not synapse-local
(though it runs on the slow learning timescale, and disabling it costs only
$0.85$\,pp; Sect.~\ref{sec:ablation}); and (iii)~a \emph{nearest-neighbour
approximation for MaxPool}, whose exact $\JT$ would require cached argmax
indices (the approximation costs ${\approx}1.25$\,pp; Sect.~\ref{sec:ablation}).
Within these bounds, Eq.~(\ref{eq:main_result}) is exact for the non-pooling,
frozen-normalization portion of the network.

The identity was overlooked for two historical reasons.
Classical FA~\cite{lillicrap2016random} predated
BatchNorm~\cite{ioffe2015batch}: FA analysed $f = \sigma(Wx)$, where
$\WT v$ is genuinely only an approximation---missing $\sigp$.
When normalization became standard, the FA community moved to learned feedback
matrices~\cite{akrout2019deep,kunin2020two} and target
propagation~\cite{lee2015difference}, treating the transport gap as inherent.
PC, meanwhile, commonly freezes batch statistics during inner-loop
inference (running BatchNorm in eval mode) but viewed this as an engineering
workaround; the implication---that $J_{\mathrm{Norm}}$ collapses to a fixed
diagonal, rendering the remaining correction purely local---went unnoticed.
The two communities each held one piece:
FA knew $\WT$ was approximate but lacked frozen normalization;
PC had frozen normalization but never examined its Jacobian structure.

Substituting Eq.~(\ref{eq:main_result}) into the PC error-transport step
yields WF-Act-PC, which removes the global backward pass from error
transport---local up to the weight-symmetry and soft spectral-norm caveats
above---with no auxiliary feedback networks and no target propagation.
On CIFAR-10 (cross-entropy, 50 epochs, 5 seeds), WF-Act-PC surpasses
iPC~\cite{salvatori2024stable}---the strongest classical PC baseline---on
every architecture by $2.7$--$22.3$\,pp, and, with both methods tuned per
architecture, reaches BP-level accuracy on deep architectures
(VGG-9: $93.57$\% vs.\ $92.43$\%;
ResNet-18: $92.76$\% vs.\ $91.54$\%).
On CIFAR-100, WF-Act-PC scales from $65.64$\% (VGG-5) to $71.07$\%
(ResNet-18), exceeding tuned BP by $+0.85$\,pp on ResNet-18, while
iPC collapses ($56.07 \to 29.45$\%) and DPC-CN degrades
($66.85 \to 60.59$\%); on the deepest CIFAR-100 VGG cells, however, tuned BP
catches up (a tie on VGG-7 and a $0.84$\,pp BP lead on VGG-9).
On Tiny-ImageNet (200 classes), WF-Act-PC again improves with depth
($46.0 \to 61.1$\%, VGG-5$\to$ResNet-18) and exceeds tuned BP at both depths.

\medskip\noindent\textbf{Contributions.}
\begin{enumerate}[leftmargin=*,topsep=2pt,itemsep=1pt]
\item \textbf{Theoretical.}
  We observe that, under frozen normalization, the exact Jacobian transpose
  of any $\mathrm{Act}(\mathrm{Norm}(L(\cdot)))$ layer factors into locally
  available terms---$\sigp$ and $\mathbf{s}$ alongside $\LT$
  (Observation~\ref{thm:wf}). This \emph{locality} resolves the $\JT$ transport
  problem for the layer class underlying modern deep networks.
\item \textbf{Algorithmic.}
  We derive WF-Act-PC, a PC algorithm for deep convolutional networks in
  which error transport is computed from local factors---local up to weight
  symmetry and a soft, slow-timescale spectral-norm control, with no autograd
  VJP in the inner loop.
\item \textbf{Empirical.}
  WF-Act-PC surpasses iPC, the strongest classical PC baseline, by
  $2.7$--$22.3$\,pp and is the only PC method whose accuracy improves with
  depth, on CIFAR-10/100 and Tiny-ImageNet.
  With both methods tuned per architecture, it reaches BP-level accuracy on
  deep architectures (VGG-7/9, ResNet-18), narrowing the accuracy gap between
  PC and backpropagation on deep convolutional networks.
\end{enumerate}

\begin{figure}[t]
  \centering
  \includegraphics[width=\linewidth]{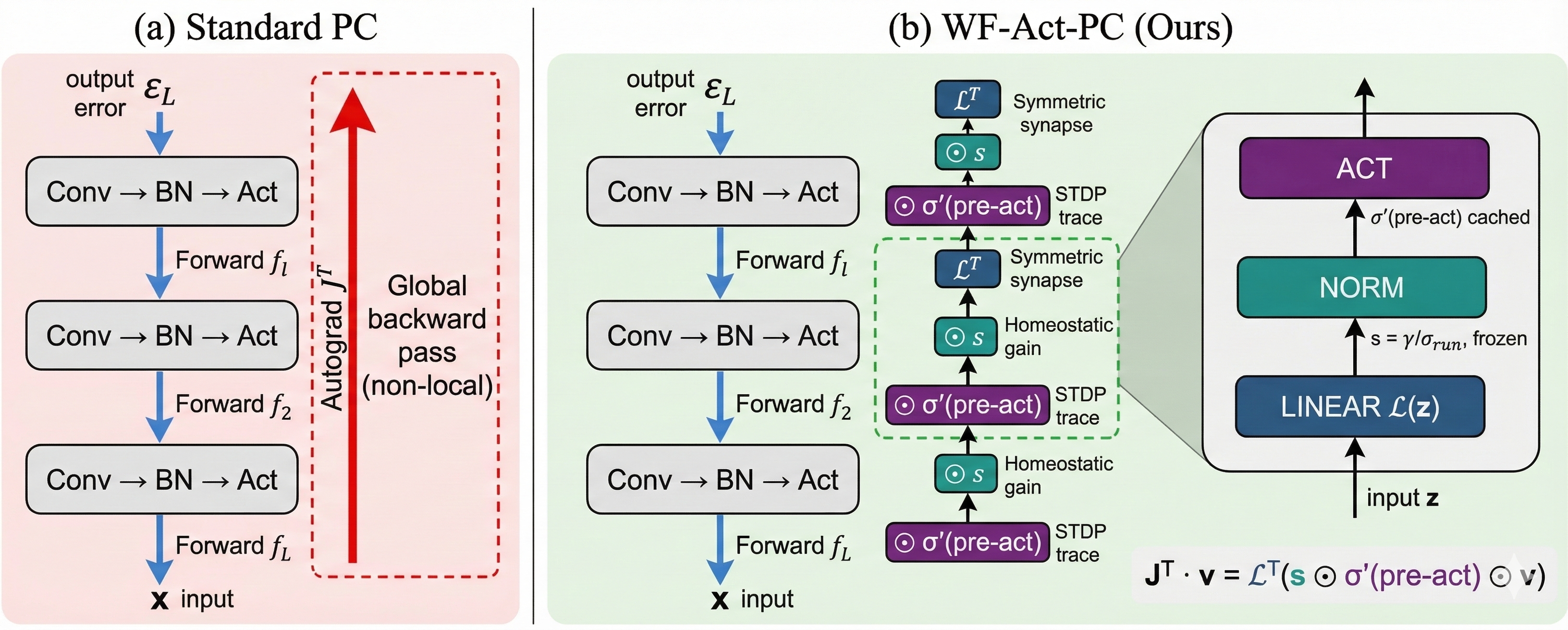}
  \caption{\textbf{WF-Act-PC overview.}
  Each layer $f_l = \mathrm{Act}(\mathrm{Norm}(L(\cdot)))$ maintains three
  local factors: the symmetric synapse $\LT$, the activation derivative
  $\sigp$ (STDP trace), and the normalization gain
  $\mathbf{s} = \gamma/\sigma_{\mathrm{run}}$ (homeostatic gain).
  Error transport uses
  $\JT v = \LT(\mathbf{s} \odot \sigp \odot v)$; no autograd backward
  pass is needed.
  The crossed-out arrow indicates the eliminated global
  $\JT$ computation.}
  \label{fig:method}
\end{figure}

\section{Related Work}
\label{sec:related}

\paragraph{Predictive Coding.}
PC~\cite{rao1999predictive,friston2005theory} posits that the brain minimizes
hierarchical prediction errors.
Whittington and Bogacz~\cite{whittington2017approximation} showed that PC
weight updates approximate backpropagation gradients at convergence.
Incremental PC (iPC)~\cite{salvatori2024stable} improves convergence by
updating weights simultaneously with states and is the strongest classical PC
variant.
The PCX benchmark~\cite{pinchetti2025benchmarking} provides standardized
evaluation: iPC reaches $85.51$\% on VGG-5/CIFAR-10 but degrades to
$70.44$\% on ResNet-18---a depth-scaling failure shared by all PC
methods tested.

\paragraph{Depth scalability in PC.}
Several recent works address depth scaling from different angles.
Qi et al.~\cite{qi2025training} diagnose \emph{energy imbalance}---prediction
error concentrating near the output---and propose exponentially decayed
weighting (DPC-CN), achieving $89.46$\% on VGG-5 but degrading to $86.55$\%
on VGG-9.
Goemaere et al.~\cite{goemaere2025epc} identify exponential signal decay in
state-based PC and introduce error-based PC (ePC), a reparametrization that
eliminates signal decay and matches backpropagation on deep architectures.
Innocenti et al.~\cite{innocenti2025mupc} scale PC to 100+ layers via
Depth-$\mu$P parameterization ($\mu$PC), but only on \emph{fully connected}
residual networks; the authors note that CIFAR-10 performance is ``far
from SOTA because of the fully connected (as opposed to convolutional)
architectures used,'' and no convolutional results are reported.
All three approaches improve depth robustness but \emph{retain autograd
$\JT$ computation}---the non-local operation that our observation
addresses---and none demonstrate competitive accuracy on standard
convolutional benchmarks.

\paragraph{Feedback alignment and the weight transport problem.}
FA~\cite{lillicrap2016random} replaces $\JT$ with a fixed random matrix $B$;
Direct FA~\cite{nokland2016direct} projects output errors directly to each
layer.
Both fail on deep CNNs~\cite{bartunov2018assessing,launay2020direct}.
Observation~\ref{thm:wf} offers a sharper diagnosis: for
$\mathrm{Act}(\mathrm{Norm}(L(\cdot)))$ layers, $\LT v$ is missing exactly
$\sigp$ and $\mathbf{s}$; restoring them makes weight feedback exact (given
the symmetric $\LT$ that FA was designed to avoid).
FA underperformed here not because weight feedback is fundamentally limited,
but because two local corrections were absent.

\paragraph{Other bio-plausible methods.}
Target propagation~\cite{lee2015difference} learns separate inverse mappings;
learned FA~\cite{akrout2019deep,kunin2020two} trains feedback matrices toward
$\WT$.
Both introduce auxiliary parameters and training complexity.
WF-Act-PC avoids these: given symmetric $\WT$, two local corrections suffice
for exact transport on this layer class.

\section{Preliminaries}
\label{sec:prelim}

Consider a feedforward network with mappings $\{f_l\}_{l=1}^{L}$,
parameters $\{W_l\}$, input $x$, and target $y$.
PC defines layer-wise prediction errors
$\eps_l = \mu_l - f_l(\mu_{l-1})$ and the variational free energy
\begin{equation}
  E = \frac{1}{2}\sum_{l=1}^{L-1} \norm{\eps_l}^2
      + \mathcal{L}\!\bigl(f_L(\mu_{L-1}),\, y\bigr),
  \label{eq:pc_energy}
\end{equation}
where $\mathcal{L}$ is the output loss (cross-entropy in all our experiments).
Training alternates between \emph{inference}---minimizing $E$ w.r.t.\ latent
states $\{\mu_l\}$ (or equivalently, errors $\{\eps_l\}$)---and
\emph{learning}---updating weights $\{W_l\}$ at the equilibrated states.

\paragraph{Error transport and the $\JT$ problem.}
Under a top-to-bottom (Gauss-Seidel) update
schedule~\cite{millidge2022predictive}, the energy gradient w.r.t.\
$\eps_l$ takes the form
\begin{equation}
  \frac{\partial E}{\partial \eps_l}
    = \eps_l - J_{l+1}^{\top} \cdot \eps_{l+1},
  \label{eq:pc_grad}
\end{equation}
where $J_{l+1}$ is the Jacobian of $f_{l+1}$ evaluated at the current state.
Weight updates are local---$\partial E / \partial W_l$ depends only on
activities at layers $l$ and $l{-}1$.
The \emph{sole non-local computation} is the
$J_{l+1}^{\top} \cdot \eps_{l+1}$ term: for a typical Conv-BN-Act block,
evaluating this product requires an autograd backward pass through the entire
block.

\paragraph{Error-based formulation (from ePC).}
Following Goemaere et al.~\cite{goemaere2025epc}, we optimize
directly over errors $\eps_l$ rather than states~$\mu_l$, with states
recovered as $\mu_l = f_l(\mu_{l-1}) + \eps_l$.
This reparametrization eliminates exponential signal
decay during inference: errors at all layers receive gradient signals from
the first iteration, rather than waiting for signals to propagate
layer by layer.
The energy and gradient structure
(Eqs.~\ref{eq:pc_energy}--\ref{eq:pc_grad}) remain identical; only the
optimization variables change.
WF-Act-PC adopts two components from ePC---error-based variables and a
top-to-bottom sequential (Gauss-Seidel) sweep---and pairs them with an
RMSProp inner-loop optimizer.
Observation~\ref{thm:wf} replaces the remaining autograd $\JT$ computation
with a closed-form local formula, making the inner loop free of autograd
VJPs.
The two advances address distinct bottlenecks: ePC resolves signal decay;
our observation eliminates non-local transport.

\section{Method}
\label{sec:method}

\subsection{Local Factorization of the Jacobian Transpose}
\label{sec:theorem}

Weight feedback is conventionally treated as an approximation of $\JT$.
We make a simple observation: for modern normalized layers, the exact
$\JT$---the same quantity backpropagation computes---factors into terms that
are each locally available, once two corrections are included.
We record this as Observation~\ref{thm:wf}.

\begin{observation}[Local factorization of the Jacobian transpose]
\label{thm:wf}
Let $f(x) = \mathrm{Act}(\mathrm{Norm}(L(x)))$ where:
\begin{itemize}[topsep=2pt,itemsep=1pt]
  \item $L$ is any linear operator (convolution, fully-connected, depthwise
    convolution, \ldots) with adjoint $\LT$;
  \item $\mathrm{Norm}$ is any affine normalization with frozen per-channel
    scale $\mathbf{s}$ and shift $\mathbf{b}$, so that
    $\mathrm{Norm}(z) = \mathbf{s} \odot z + \mathbf{b}$ with $\mathbf{s}$
    fixed during error propagation
    (BatchNorm~\cite{ioffe2015batch}: $s_c = \gamma_c/\sigma^{\mathrm{run}}_c$;
     LayerNorm~\cite{ba2016layer}: $s = \gamma/\sigma_{\mathrm{LN}}$;
     GroupNorm~\cite{wu2018group}: $s = \gamma_g/\sigma_G$;
     \textnormal{no norm}: $\mathbf{s} = \mathbf{1}$);
  \item $\mathrm{Act}$ is any pointwise activation with derivative $\sigp$.
\end{itemize}
Then for any vector $v$ in the output space of $f$:
\begin{equation}
  \JT \cdot v \;=\;
    \LT\!\Bigl(\,\mathbf{s} \odot
    \sigp\!\bigl(\mathrm{Norm}(L(x))\bigr) \odot v\,\Bigr).
  \label{eq:wf_identity}
\end{equation}
\end{observation}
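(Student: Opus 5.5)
The plan is to apply the chain rule to the three-stage composition $f = \mathrm{Act}\circ\mathrm{Norm}\circ L$, compute the Jacobian of each stage, and then transpose the product. Write $z = L(x)$ and $u = \mathrm{Norm}(z) = \mathbf{s}\odot z + \mathbf{b}$, so $f(x) = \mathrm{Act}(u)$. Then
\[
J = J_{\mathrm{Act}}(u)\, J_{\mathrm{Norm}}\, J_L ,
\]
with each factor evaluated at the current forward point.

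We identify each factor in turn.

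\emph{Linear stage.} Since $L$ is linear, its Jacobian is $L$ itself, independent of $x$. Its transpose is therefore the adjoint $\LT$. For convolutions this is the transposed convolution, and for depthwise and fully-connected layers it is the usual adjoint.

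\emph{Normalization stage.} Because $\mathbf{s}$ and $\mathbf{b}$ are frozen, they do not depend on $z$. Hence $\mathrm{Norm}$ is affine in $z$, and its Jacobian is the diagonal operator $\mathrm{diag}(\mathbf{s})$, with $\mathbf{s}$ broadcast over spatial positions within each channel (or over groups, for GroupNorm). This is the only place the frozen-statistics hypothesis is used. With live batch or layer statistics, $\sigma$ would depend on $z$, producing the extra centering and projection terms of the full BatchNorm or LayerNorm Jacobian. I would state this explicitly so it is clear why the hypothesis is needed.

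\emph{Activation stage.} Since $\mathrm{Act}$ acts pointwise, its Jacobian is $\mathrm{diag}(\sigp(u))$.

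Transposing the product reverses the order of the factors. Both diagonal factors are self-adjoint, so for any $v$,
\[
\JT v = \LT\,\mathrm{diag}(\mathbf{s})\,\mathrm{diag}(\sigp(u))\, v = \LT\bigl(\mathbf{s}\odot\sigp(u)\odot v\bigr).
\]
This uses the fact that products of diagonal operators commute and equal Hadamard products. Substituting $u = \mathrm{Norm}(L(x))$ gives Eq.~(\ref{eq:wf_identity}).

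No step is genuinely hard. The main care point is bookkeeping the broadcast of the per-channel $\mathbf{s}$ so that it is a genuine diagonal on the output space of $L$. A secondary point is non-differentiable activations such as ReLU at $0$: there I would interpret $\sigp$ as the same subgradient convention autograd uses, so the identity holds as an equality of the computed VJPs.
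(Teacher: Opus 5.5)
Your proposal is correct and follows the paper's proof essentially step for step: the chain rule on $\mathrm{Act}\circ\mathrm{Norm}\circ L$, a diagonal Jacobian $\mathrm{diag}(\mathbf{s})$ for the frozen affine normalization, a diagonal Jacobian for the pointwise activation, and the adjoint for the linear stage, composed in reversed order. Your remarks on broadcasting $\mathbf{s}$ over spatial positions and on the subgradient convention at non-differentiable points are sound refinements; the paper covers the broadcasting only in its supplementary dimension-tracking proof and does not address non-smooth activations.
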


\begin{proof}
By the chain rule, $J = J_{\mathrm{Act}} \cdot J_{\mathrm{Norm}} \cdot J_L$,
so $\JT = J_L^{\top} \cdot J_{\mathrm{Norm}}^{\top} \cdot
J_{\mathrm{Act}}^{\top}$.
We evaluate each factor.

\textbf{Step 1} (activation).
Since $\mathrm{Act}$ is pointwise,
$J_{\mathrm{Act}} = \mathrm{diag}(\sigp(\mathrm{pre\text{-}act}))$, giving
$J_{\mathrm{Act}}^{\top} \cdot v = \sigp(\mathrm{pre\text{-}act}) \odot v$.

\textbf{Step 2} (frozen normalization).
Because $\mathrm{Norm}(z) = \mathbf{s} \odot z + \mathbf{b}$ with
$\mathbf{s}$ frozen, $J_{\mathrm{Norm}} = \mathrm{diag}(\mathbf{s})$, giving
$J_{\mathrm{Norm}}^{\top} \cdot u = \mathbf{s} \odot u$.

\textbf{Step 3} (linear operator).
By definition of the adjoint, $J_L^{\top} = \LT$.

\textbf{Composition.}
\[
  \JT \cdot v
  = \LT\!\bigl(J_{\mathrm{Norm}}^{\top}(J_{\mathrm{Act}}^{\top} \cdot v)\bigr)
  = \LT\!\bigl(\mathbf{s} \odot \sigp(\mathrm{pre\text{-}act}) \odot v\bigr).
  \qedhere
\]
\end{proof}

\noindent
The derivation is elementary; what had gone unrecognised in prior
weight-transport research is that both corrections are \emph{locally
available} (see Sect.~\ref{sec:intro} for the historical analysis).
A dimension-tracking proof for the Conv+BN+Act case and instantiations for
other normalization schemes appear in the supplementary material.

\begin{remark}[Frozen normalization is a feature, not a constraint]
\label{rem:frozen}
Three independent motivations converge on frozen statistics.
\emph{Mathematically}, frozen $\mathbf{s}$ is required for
$J_{\mathrm{Norm}}$ to collapse to a fixed diagonal (Step~2).
\emph{Algorithmically}, running BatchNorm in eval mode during the inner
loop prevents corruption of running statistics, a natural choice for
inference-loop stability.
\emph{Biologically}, homeostatic normalization gains are slow variables,
stable across a single perception
episode~\cite{carandini2012normalization}.
The convergence of all three motivations on the same condition is not
coincidental: the same frozen statistics make PC biologically sensible and
weight feedback exact.
\end{remark}

\begin{remark}[Special cases]
\label{rem:special}
Without normalization ($\mathbf{s} = \mathbf{1}$),
Eq.~(\ref{eq:wf_identity}) reduces to
$\JT v = \LT(\sigp(L(x)) \odot v)$---the textbook backprop formula;
classical FA further drops $\sigp$.
For MaxPool, our implementation uses nearest-neighbour upsampling
(repeating each value over the $2{\times}2$ pool window), a local
approximation that avoids caching argmax positions and is the one place
Eq.~(\ref{eq:wf_identity}) does not hold exactly.
The autograd-vs-local ablation (Table~\ref{tab:ablation}) confirms this
approximation incurs small accuracy loss.
\end{remark}

\begin{remark}[Residual skip connections]
\label{rem:residual}
For $f(x) = f_{\mathrm{plain}}(x) + x$, we have
$\JT v = J_{\mathrm{plain}}^{\top} v + v$;
Observation~\ref{thm:wf} covers $J_{\mathrm{plain}}^{\top}$ exactly.
The identity bypass ensures $\sigma_{\max}(J) \geq 1$, so spectral norm
clipping alone cannot guarantee contraction.
Top-layer gradient clipping ($\norm{v_{\mathrm{top}}} \leq c$) bounds
errors before the Gauss-Seidel loop, enabling stable training with
skip scale $= 1.0$.
\end{remark}

\begin{corollary}[Local computability]
\label{cor:bio}
Every factor in Eq.~(\ref{eq:wf_identity}) is locally computable:

\begin{center}
\renewcommand{\arraystretch}{1.2}
\begin{tabular}{lll}
  \toprule
  \textbf{Factor} & \textbf{Computation} & \textbf{Neural mechanism} \\
  \midrule
  $\LT$ & Adjoint of feedforward operator & Symmetric synapse \\
  $\sigp(\mathrm{pre\text{-}act})$ & Activation derivative at pre-act
    & STDP eligibility trace \\
  $\mathbf{s} = \gamma/\sigma_{\mathrm{run}}$ & Per-channel norm.\ gain
    & Homeostatic gain control \\
  \bottomrule
\end{tabular}
\end{center}

\noindent
No global backward pass or separate feedback weight matrices are required.
The symmetric synapse $\LT$, however, is the same $W^{\top}$ already assumed
by the PC framework: WF-Act-PC makes this pre-existing assumption locally
computable but does \emph{not} remove it---a limitation we return to in
Sect.~\ref{sec:conclusion}.
\end{corollary}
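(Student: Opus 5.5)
The plan is to read Corollary~\ref{cor:bio} as a statement about \emph{which quantities each factor in Eq.~(\ref{eq:wf_identity}) depends on}. For each factor I will exhibit a computation that uses only information already resident at the layer: its own weights, its own pre-activations, and its own normalization statistics. I will not need any signal computed by another layer's backward pass. The starting point is Observation~\ref{thm:wf}, which already establishes that $\JT v$ is exactly the composition of three maps: elementwise multiplication by $\sigp(\mathrm{pre\text{-}act})$, elementwise multiplication by $\mathbf{s}$, and application of $\LT$. So it suffices to certify locality of each map separately, together with locality of the composition. The composition is purely sequential within the same layer and needs no further communication.

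\textbf{Factor by factor.}
\begin{enumerate}
\item \emph{Activation derivative.} $\sigp(\mathrm{pre\text{-}act})$ is pointwise. Its $i$-th entry depends only on the $i$-th pre-activation, which the forward sweep of the same layer has just produced. So it is a per-unit quantity, computable by applying the known scalar function $\sigp$, much like an eligibility trace.
\item \emph{Normalization gain.} By the frozen-normalization hypothesis of Observation~\ref{thm:wf}, $\mathbf{s}$ is a fixed per-channel vector. For BatchNorm it is $\gamma_c/\sigma^{\mathrm{run}}_c$, built from the layer's own affine parameter and running statistic, neither of which changes during inference. Multiplying by $\mathbf{s}$ is therefore per-channel scaling by a stored constant. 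I will check the other listed normalizations in the same way, treating their $\sigma$ as frozen over the inner loop.
\item \emph{Adjoint.} Each output of $\LT u$ is a weighted sum of components of $u$ over the receptive field, using exactly the weights of the forward synapses read in reverse. This is local given weight symmetry. That is the assumption I will state explicitly rather than prove, matching the caveat already recorded in the corollary.
\end{enumerate}

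\textbf{No global backward pass.} To conclude this, I will note that under the Gauss--Seidel schedule of Eq.~(\ref{eq:pc_grad}), the vector $v = \eps_{l+1}$ is itself a state variable of layer $l+1$, not the result of an autograd chain. Hence the whole transport term $J_{l+1}^\top\eps_{l+1}$ is built from quantities at layers $l$ and $l+1$ only. No separate feedback matrix appears, because the only matrix used is $\LT$.

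\textbf{Main obstacle.} The difficult step is not technical but definitional: pinning down what "locally computable" means so that the $\LT$ factor qualifies. I will handle this by making the claim conditional on weight symmetry, and by flagging two points as outside the corollary's scope: the MaxPool approximation (Remark~\ref{rem:special}) and the non-local spectral-norm control.
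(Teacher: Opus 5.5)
Your proposal is correct and takes essentially the same approach as the paper. The corollary rests on Observation~\ref{thm:wf} plus a factor-by-factor check: $\sigp(\mathrm{pre\text{-}act})$ is cached pointwise from the forward pass, $\mathbf{s}$ is a frozen per-channel gain, $\LT$ (transposed convolution or $\WT$) is local only under the weight-symmetry assumption, and $\eps_{l+1}$ is an adjacent-layer state rather than an autograd output. This is the same enumeration the paper gives after Eq.~(\ref{eq:wfactpc_update}), and your scoping-out of the MaxPool approximation and soft SNC matches the paper's stated caveats.
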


\subsection{The WF-Act-PC Algorithm}
\label{sec:algorithm}

Substituting Observation~\ref{thm:wf} into the PC error-transport
step~(\ref{eq:pc_grad}) yields a local energy gradient for each
layer~$l$:
\begin{equation}
  \frac{\partial E}{\partial \eps_l}
  = \eps_l - L_{l+1}^{\top}\!\bigl(\,\mathbf{s}_{l+1} \odot
    \sigp(\mathrm{pre\text{-}act}_{l+1}) \odot \eps_{l+1}\,\bigr).
  \label{eq:wfactpc_update}
\end{equation}
All quantities are locally available:
$\eps_l$ and $\eps_{l+1}$ are adjacent-layer errors;
$\mathbf{s}_{l+1}$ is the normalization gain
($\gamma_{l+1}/\sigma^{\mathrm{run}}_{l+1}$ for BatchNorm,
$\bm{1}$ without normalization);
$\sigp(\mathrm{pre\text{-}act}_{l+1})$ is cached from the forward pass; and
$L_{l+1}^{\top}$ is transposed convolution for convolutional layers or
$W^{\top}$ for dense layers.
We minimize $E$ w.r.t.\ $\{\eps_l\}$ via RMSProp gradient descent on
Eq.~(\ref{eq:wfactpc_update}), with a top-to-bottom Gauss-Seidel sweep
exploiting the triangular dependency structure.
The full procedure is given in Algorithm~\ref{alg:wfactpc}.

\begin{algorithm}[t]
\caption{WF-Act-PC --- one training step}
\label{alg:wfactpc}
\begin{algorithmic}[1]
\Require Batch $(x, y)$; model $\{f_l, W_l\}$; iterations $T$;
         inference step $\eta_{\varepsilon}$; RMSProp decay $\alpha_{\mathrm{rms}}$;
         weight step $\eta_w$; SNC threshold $\tau$; error clip $c$
\State \textbf{Forward pass:} compute $h_l = f_l(h_{l-1})$, $h_0 = x$;
       cache $\mathrm{pre\text{-}act}_l$ and $\mathbf{s}_l$ for each $l$
       ($\mathbf{s}_l = \bm{1}$ if no normalization)
\State $\eps_l \leftarrow \bm{0}$, \; $r_l \leftarrow \bm{1}$ for all $l$
       \Comment{Errors and RMSProp states}
\State $\eps_L \leftarrow \tilde{y} - \mathrm{softmax}(h_L)$; \quad
       $\eps_L \leftarrow \eps_L \cdot \min\!\bigl(1,\;
       c\, / \norm{\eps_L}\bigr)$
       \Comment{Output error + clip}
\For{$t = 1$ to $T$} \Comment{Gauss-Seidel top $\to$ bottom}
  \For{$l = L{-}1$ \textbf{downto} $1$}
    \State $g_l \leftarrow \eps_l - L_{l+1}^{\top}\!\bigl(\mathbf{s}_{l+1}
             \odot \sigp(\mathrm{pre\text{-}act}_{l+1})
             \odot \eps_{l+1}\bigr)$
           \Comment{Local gradient~(\ref{eq:wfactpc_update})}
    \State $r_l \leftarrow (1{-}\alpha_{\mathrm{rms}})\,r_l
             + \alpha_{\mathrm{rms}}\,g_l^2$
    \State $\eps_l \leftarrow \eps_l
             - \eta_{\varepsilon}\, g_l \,/\, \sqrt{r_l + 10^{-8}}$
           \Comment{RMSProp update}
  \EndFor
\EndFor
\State Compute $E_{\mathrm{local}}$; update $W_l \leftarrow W_l -
       \eta_w \cdot \mathrm{Adam}(\nabla_{W_l} E_{\mathrm{local}})$
       \Comment{Local weight update}
\For{each layer $l$ with $\sigma_{\max}(W_l) > \tau$} \Comment{Soft SNC}
  \State $W_l \leftarrow W_l -
         (\sigma_{\max}(W_l) - \tau)
         \,\bm{u}_l\bm{v}_l^{\top}$
\EndFor
\end{algorithmic}
\end{algorithm}

We monitor convergence via the \emph{convergence gap}
\begin{equation}
  \Delta = E_{\mathrm{local}} - \tfrac{1}{2}\textstyle\sum_l \norm{\eps_l}^2
         \;\geq\; 0.
  \label{eq:gap}
\end{equation}
When $\Delta \approx 0$, the inner loop has converged and weight updates are
exact local gradients; when $\Delta \gg 0$, updates are noisy.
This provides an online diagnostic of training health without additional cost.

\subsection{Soft Spectral Norm Clipping}
\label{sec:snc}

Stable training requires controlling the spectral norm of each layer's
weight matrix.
After each weight update, if $\sigma_{\max}(W_l) > \tau$, we apply a soft
rank-1 correction:
\begin{equation}
  W_l \;\leftarrow\; W_l
    - \bigl(\sigma_{\max}(W_l) - \tau\bigr)\,\bm{u}_l\,\bm{v}_l^{\top},
  \label{eq:soft_snc}
\end{equation}
where $(\bm{u}_l, \bm{v}_l)$ are the leading singular vectors of $W_l$
(estimated by $n_{\mathrm{steps}} = 20$ power iterations).
Unlike hard clipping ($W \leftarrow W \cdot \tau/\sigma_{\max}$), this
modifies only the leading singular direction, avoiding catastrophic weight
rescaling during transient spikes.

Power iteration is \emph{not} synapse-local---together with weight symmetry,
one of the two assumptions under which WF-Act-PC is ``local''.
It operates on weights (not error signals) and is decoupled from the
per-sample inference loop---analogous to homeostatic synaptic
scaling~\cite{turrigiano2004homeostatic} on the slow learning timescale.
Disabling SNC reduces accuracy by only $0.85$\,pp
(Table~\ref{tab:ablation}), so WF-Act-PC's core
advantage---local $\JT$ transport---does not depend on it.

Independent of SNC, local error transport has engineering benefits: no
global backward pass, layer-parallel weight updates, $O(1)$ rather than
$O(L)$ activation memory per layer, and a natural mapping to neuromorphic
hardware.

\subsection{Convergence}
\label{sec:convergence}

\begin{proposition}
\label{prop:convergence}
With frozen normalization (and, without BatchNorm, this holds directly),
the PC energy $E$ is quadratic in $\{\eps_l\}$.
For fixed states $\{h_l\}$, the top-to-bottom Gauss-Seidel sweep exploits
the triangular dependency structure: each $\eps_l$ update depends only on
the already-updated $\eps_{l+1}$.
When $\sigma_{\max}(J_{l+1}) < 1$ for all $l$, a single sweep converges
to the global minimiser of $E$ w.r.t.\ $\{\eps_l\}$.
\end{proposition}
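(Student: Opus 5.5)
The plan is to make precise the sense in which $E$ is quadratic, and then to show that the top-to-bottom sweep solves a block-triangular linear system exactly. The key move is to linearise the state map at the cached forward pass. Observation~\ref{thm:wf} shows that each layer's transport operator is the fixed linear map
\[
  J_{l+1}^{\top} v = L_{l+1}^{\top}\bigl(\mathbf{s}_{l+1}\odot\sigp(\mathrm{pre\text{-}act}_{l+1})\odot v\bigr).
\]
Here $\mathbf{s}_{l+1}$ is frozen (Remark~\ref{rem:frozen}), and $\sigp$ is evaluated at the cached pre-activations of the fixed states $\{h_l\}$. This is exactly the quantity Algorithm~\ref{alg:wfactpc} uses. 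So, for fixed states, the gradient field Eq.~(\ref{eq:wfactpc_update}) is affine in $\{\eps_l\}$. It is therefore the gradient of the quadratic
\[
  \tfrac12\sum_l\norm{\eps_l}^2-\sum_l\langle\eps_l,J_{l+1}^{\top}\eps_{l+1}\rangle
\]
plus terms not depending on the free variables. I will state this honestly: the quadratic is the energy at the linearisation point. Without BatchNorm the claim "holds directly" in the same sense, with $\mathbf{s}=\mathbf{1}$.

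Next I write the stationarity conditions as a block system. From Eq.~(\ref{eq:pc_grad}), setting $\partial E/\partial\eps_l=0$ for $l=L-1,\dots,1$ gives
\[
  \eps_l = J_{l+1}^{\top}\eps_{l+1},
\]
where $\eps_L$ is fixed by the clipped output error. This system is block upper-triangular with identity diagonal blocks. The matrix $I-N$, with $N$ strictly block-triangular and hence nilpotent, is therefore invertible, and the solution is unique: $\eps_l^\star=J_{l+1}^{\top}\cdots J_{L}^{\top}\eps_L$. To see that it is the global minimiser, I use the assumption $\sigma_{\max}(J_{l+1})<1$. It gives $\norm{N}<1$ blockwise, so the symmetric Hessian $I-\tfrac12(N+N^{\top})$ is positive definite and $E$ is strictly convex. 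The assumption is also what keeps the back-substituted errors bounded and contracting, $\norm{\eps_l^\star}\le\norm{\eps_L}$.

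Then comes the sweep argument, by downward induction on $l$. When layer $l$ is processed, $\eps_{l+1}$ has already been set to $\eps_{l+1}^\star$, so the layer-$l$ subproblem is the strongly convex quadratic $\tfrac12\norm{\eps_l}^2-\langle\eps_l,J_{l+1}^{\top}\eps^\star_{l+1}\rangle$. Its exact minimiser is $\eps_l^\star$, so one exact block minimisation per layer reproduces back-substitution. After one sweep every block equals $\eps^\star$.

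The main obstacle is that Algorithm~\ref{alg:wfactpc} performs only one RMSProp step per layer per sweep, not an exact block minimisation. So "a single sweep converges" is literally true only if each block subproblem is solved exactly, for instance by taking a unit step $\eps_l\leftarrow\eps_l-g_l$, which is exact because the block Hessian is $I$. I will prove the proposition under this exact-block-update reading. I will then remark that with RMSProp steps each block is still a strongly convex $1$-smooth quadratic, so the iterates converge geometrically across repeated sweeps rather than in one. A second subtlety is the coupling term that a downstream $\eps_{l}$ induces on $\eps_{l+1}$ in the full energy. I will argue it is handled by treating $\eps_L$ as the fixed source and using the triangular ordering, which is exactly the structure the proposition invokes.
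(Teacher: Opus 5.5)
The paper states this proposition without proof, so your argument has to stand on its own. It has a genuine gap at the step where you pass from the update field to an energy. The field $g_l=\eps_l-J_{l+1}^{\top}\eps_{l+1}$ is affine with linear part $I-N$, which is not symmetric, so it is not the gradient of any function of $\{\eps_l\}$. In particular, your $Q=\tfrac12\sum_l\norm{\eps_l}^2-\sum_l\langle\eps_l,J_{l+1}^{\top}\eps_{l+1}\rangle$ has $\partial Q/\partial\eps_l=g_l-J_l\eps_{l-1}$ (for $l\ge2$), and its Hessian is $I-(N+N^{\top})$, not $I-\tfrac12(N+N^{\top})$. The latter matrix is only the symmetric part of $I-N$: its positivity makes $g$ strongly monotone, but it says nothing about convexity of an energy. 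Worse, $I-(N+N^{\top})$ can be indefinite under your hypothesis. With scalar layers $J_l=0.9$ and three free error blocks, its smallest eigenvalue is $1-0.9\sqrt{2}<0$, so $Q$ has no global minimiser. In any case, its stationary point solves a tridiagonal system, not $\eps_l=J_{l+1}^{\top}\eps_{l+1}$. The coupling term you flag at the end is exactly this failure, and the triangular ordering cannot absorb it. Exact block minimisation of $Q$ lands on $\eps^\star$ in the first sweep only because Algorithm~\ref{alg:wfactpc} initialises $\eps_{l-1}=\bm{0}$; a second sweep would move away from $\eps^\star$.

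The repair is to write down the energy the statement is actually about. Freeze each $J_l$ at the cached forward pass and fix $\eps_L$ (a first-order output loss). Then $\mu_{L-1}-h_{L-1}=\sum_{k<L}J_{L-1}\cdots J_{k+1}\eps_k$ (empty product $=I$) is linear in the errors. This gives $E(\eps)=\tfrac12\sum_{l<L}\norm{\eps_l-\eps_l^\star}^2+\mathrm{const}$ with $\eps_l^\star=J_{l+1}^{\top}\cdots J_L^{\top}\eps_L$, so $E$ has Hessian $I$ and a unique global minimiser $\eps^\star$. Eq.~(\ref{eq:pc_grad}) is really the state-space gradient $\partial E/\partial\mu_l$. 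One checks that $g=(I-N)(\eps-\eps^\star)=(I-N)\nabla_{\eps}E$. By unit block-triangularity, the unique zero of $g$ is therefore the global minimiser, and your unit-step top-down sweep is back-substitution onto it.

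Note that $\sigma_{\max}(J_{l+1})<1$ is then not needed for convexity, uniqueness, or one-sweep exactness. It only gives $\norm{\eps_l^\star}\le\norm{\eps_L}$ and limits transient growth in inexact sweeps. That is why the conclusion also covers residual blocks, which Remark~\ref{rem:residual} notes need not satisfy the spectral condition. Separately, your closing claim that RMSProp sweeps converge geometrically is unsupported. The effective step $\eta_{\varepsilon}/\sqrt{r_l+10^{-8}}$ is not bounded below $2$, which is consistent with the overshoot the paper reports at $T=40$.
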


\noindent
In practice, the spectral condition is not enforced directly.
Soft SNC maintains $\sigma_{\max}(W_l) \leq \tau$, an empirically sufficient
proxy: with $\tau = 3.0$ and $T{=}20$ RMSProp iterations, the convergence gap
$\Delta$ reaches near-zero on all architectures
(Sect.~\ref{sec:ablation}).
For residual architectures, gradient clipping of $v_{\mathrm{top}}$ provides
additional stabilization (Remark~\ref{rem:residual}).

\section{Experiments}
\label{sec:experiments}

\subsection{Setup}
\label{sec:setup}

\paragraph{Datasets.}
CIFAR-10 and CIFAR-100~\cite{krizhevsky2009learning}: 50{,}000 training /
10{,}000 test images across 10 and 100 classes.
Both use RandomCrop(32, padding=4) and RandomHorizontalFlip(0.5), matching
the PCX benchmark protocol~\cite{pinchetti2025benchmarking}.
We additionally evaluate on Tiny-ImageNet (200 classes, $64{\times}64$) to
test depth scaling beyond CIFAR (Sect.~\ref{sec:cifar100}).

\paragraph{Architectures.}
VGG-5/7/9~\cite{simonyan2015very} (channel configs [128, 256, 512, 512],
[128, 128, 256, 256, 512, 512],
[128, 128, 256, 256, 512, 512, 512, 512])
and standard ResNet-18~\cite{he2016deep},
following PCX benchmark configurations.
All four architectures use Conv+GELU blocks \emph{without} BatchNorm
($\mathbf{s} = \bm{1}$ in Observation~\ref{thm:wf}): the VGGs use
Conv+GELU+MaxPool, and ResNet-18 adds residual connections.
The main experiments therefore exercise the $\sigp$ correction---and, for
ResNet-18, the residual handling of Remark~\ref{rem:residual}.
We validate the remaining normalization-gain correction
($\mathbf{s} = \gamma/\sigma_{\mathrm{run}} \neq \bm{1}$) separately by adding
BatchNorm to VGG-5 (Sect.~\ref{sec:ablation}, Table~\ref{tab:ablation_bn}).

\paragraph{Hyperparameters.}
All CIFAR-10 experiments share: Soft SNC ($\tau = 3.0$,
$n_{\mathrm{steps}} = 20$), warmup-cosine LR schedule (2 epoch warmup,
decay to $10^{-6}$), AdamW ($\beta_1{=}0.9$, $\beta_2{=}0.999$, weight
decay $10^{-4}$), RMSProp inner loop ($\alpha_{\mathrm{rms}}{=}0.2$,
$\eta_{\varepsilon}{=}0.1$), $T{=}20$, label smoothing $0.05$,
Cutout (patch size $8$), gradient clipping (norm $\leq 1.0$),
top-layer error clipping $\norm{v_{\mathrm{top}}} \leq 5.0$,
batch size 128, and learning rate $\eta_w = 7{\times}10^{-4}$ with
\emph{no per-model tuning}.

\paragraph{Baselines.}
Prior PC and BP numbers are from the PCX
benchmark~\cite{pinchetti2025benchmarking} (50 epochs, 5 seeds, with
augmentation):
the benchmark's two backpropagation references, BP-CE and BP-SE
(standard backpropagation with cross-entropy and squared-error loss,
respectively), trained under its own recipe;
iPC~\cite{salvatori2024stable}, the strongest PC-family method in the
benchmark;
DPC-CN~\cite{qi2025training}, which adds exponential energy weighting;
and ePC~\cite{goemaere2025epc}, which reparametrises PC in error
coordinates.
For a controlled comparison, we additionally report ``BP (tuned)'':
backpropagation trained with the \emph{same} augmentation recipe as
WF-Act-PC (Cutout, label smoothing, cosine warmup-decay) and tuned
per architecture (learning rate searched over
$\{3,7\}{\times}10^{-4}, \{1,3\}{\times}10^{-3}$).
Because WF-Act-PC and BP have different optimal learning rates, tuning each
method at its own best LR is the fair protocol; the full grid is in
Suppl.\ Mat.\ Sect.~H.
Any gap between the benchmark's BP references and BP~(tuned) reflects
differences in loss and training recipe, not the learning algorithm.

\subsection{Main Results: CIFAR-10}
\label{sec:main_results}

\begin{table}[t]
\centering
\caption{Test accuracy (\%) at 50 epochs (CE loss unless noted; BP-SE uses
squared-error loss).
BP-CE, BP-SE, and prior PC baselines: mean\,$\pm$\,std over 5 seeds
from~\cite{pinchetti2025benchmarking,qi2025training,goemaere2025epc}.
WF-Act-PC and BP (tuned): mean\,$\pm$\,std over 5 seeds.
\emph{Bio?}: ``Yes'' = $\JT$ transport is locally computable via
Observation~\ref{thm:wf} (no autograd VJP), up to weight symmetry and soft SNC;
``Partial'' = local weight updates but $\JT$ requires autograd;
``No'' = global backward pass.
\textbf{Bold}: best per column among methods using the controlled
augmentation recipe; BP-CE and BP-SE are the benchmark's own BP references
(cross-entropy and squared-error loss) and are excluded from bolding.}
\label{tab:main}
\renewcommand{\arraystretch}{1.15}
\resizebox{\textwidth}{!}{%
\begin{tabular}{ll@{\hskip 6pt}cccc@{\hskip 10pt}cccc}
  \toprule
  & & \multicolumn{4}{c@{\hskip 10pt}}{\textbf{CIFAR-10}} &
    \multicolumn{4}{c}{\textbf{CIFAR-100}} \\
  \cmidrule(r){3-6} \cmidrule(l){7-10}
  \textbf{Method} & \textbf{Bio?}
    & \textbf{VGG-5} & \textbf{VGG-7} & \textbf{VGG-9}
    & \textbf{ResNet-18}
    & \textbf{VGG-5} & \textbf{VGG-7} & \textbf{VGG-9}
    & \textbf{ResNet-18} \\
  \midrule
  BP-CE~\cite{pinchetti2025benchmarking} (CE)
    & No
    & $88.11_{\pm0.13}$ & $88.60_{\pm0.10}$ & $89.18_{\pm0.08}$
    & $92.83_{\pm0.18}$
    & $60.82_{\pm0.10}$ & $59.96_{\pm0.10}$ & $60.63_{\pm0.28}$
    & $72.32_{\pm0.26}$ \\
  BP-SE~\cite{pinchetti2025benchmarking} (SE)
    & No
    & $89.43_{\pm0.12}$ & $89.91_{\pm0.10}$ & $90.02_{\pm0.18}$
    & $93.21_{\pm0.07}$
    & $66.28_{\pm0.23}$ & $65.36_{\pm0.15}$ & $65.51_{\pm0.23}$
    & $71.89_{\pm0.16}$ \\
  BP (tuned)
    & No
    & $88.03_{\pm0.18}$ & $91.39_{\pm0.05}$ & $92.43_{\pm0.29}$
    & $91.54_{\pm0.36}$
    & $64.09_{\pm0.14}$ & $\mathbf{69.00}_{\pm0.30}$
    & $\mathbf{71.90}_{\pm0.32}$ & $70.22_{\pm0.27}$ \\
  DPC-CN~\cite{qi2025training} (CE)
    & Partial
    & $\mathbf{89.46}_{\pm0.14}$ & $89.11_{\pm0.26}$ & $86.55_{\pm0.81}$
    & ---
    & $\mathbf{66.85}_{\pm0.07}$ & $63.89_{\pm0.22}$ & $60.59_{\pm0.47}$
    & --- \\
  iPC~\cite{salvatori2024stable,pinchetti2025benchmarking} (CE)
    & Partial
    & $85.51_{\pm0.12}$ & $80.15_{\pm0.21}$ & $79.02_{\pm0.21}$
    & $70.44_{\pm0.81}$
    & $56.07_{\pm0.16}$ & $43.99_{\pm0.30}$ & $44.76_{\pm0.40}$
    & $29.45_{\pm1.36}$ \\
  ePC~\cite{goemaere2025epc} (CE)
    & Partial
    & $88.27_{\pm0.18}$ & $88.84_{\pm0.31}$ & $86.81_{\pm0.09}$
    & $91.73_{\pm0.21}$
    & $63.39_{\pm0.25}$ & $58.62_{\pm0.20}$ & $60.65_{\pm0.25}$
    & $69.47_{\pm0.32}$ \\
  \midrule
  \textbf{WF-Act-PC} (ours) (CE)
    & \textbf{Yes}
    & $88.25_{\pm0.31}$ & $\mathbf{92.15}_{\pm0.72}$
    & $\mathbf{93.57}_{\pm0.48}$ & $\mathbf{92.76}_{\pm0.85}$
    & $65.64_{\pm0.57}$ & $68.75_{\pm0.64}$
    & $70.32_{\pm0.53}$ & $\mathbf{71.07}_{\pm0.71}$ \\
  \bottomrule
\end{tabular}}%
\begin{flushleft}
  \footnotesize
  Both WF-Act-PC and BP (tuned) are tuned per architecture (each at its own
  best LR; VGG-5/CIFAR-10 BP not separately retuned---matched-recipe value
  shown). CIFAR-10: AdamW ($\eta_w{=}7{\times}10^{-4}$), Soft SNC
  ($\tau{=}3.0$), $T{=}20$ RMSProp, Cutout(8), 50 epochs, 5 seeds, same
  hyperparameters across all architectures. CIFAR-100: CutMix/Mixup,
  $\eta_w{=}2{\times}10^{-3}$ (VGG-5/7) or $1{\times}10^{-3}$
  (VGG-9, ResNet-18), label smoothing $0.1$; see supplementary.
  The WF-Act-PC VGG-7/9/CIFAR-100 entries above use the main CIFAR-100 recipe
  ($\eta_w{=}2{\times}10^{-3}$/$1{\times}10^{-3}$); under symmetric per-method
  LR tuning (matching BP's protocol, $\eta_w{=}7{\times}10^{-4}$) they reach
  $68.76_{\pm0.31}$ / $71.06_{\pm0.18}$, i.e.\ within $-0.24$ / $-0.84$\,pp of
  tuned BP (Suppl.\ Mat.\ Sect.~H).
\end{flushleft}
\end{table}

Table~\ref{tab:main} presents the main results.
Three patterns emerge.
\emph{First}, WF-Act-PC surpasses iPC---the strongest classical PC
method---on every architecture, with the advantage growing monotonically
with depth: the gap widens from $+2.7$\,pp (VGG-5) to $+22.3$\,pp
(ResNet-18) on CIFAR-10.
Where iPC degrades from $85.51$\% to $70.44$\%, WF-Act-PC
\emph{improves} from $88.25$\% to $93.57$\%.
\emph{Second}, with both methods tuned per architecture,
WF-Act-PC meets or exceeds BP~(tuned) on the deeper CIFAR-10 architectures
($+0.8$, $+1.1$, $+1.2$\,pp on VGG-7/9/ResNet-18); this holds under the
controlled same-recipe comparison---though both of the benchmark's own BP
references exceed WF-Act-PC on ResNet-18 (BP-CE $92.83$\%, BP-SE $93.21$\%
vs.\ $92.76$\%), reflecting the benchmark's stronger per-architecture recipe
(Sect.~\ref{sec:setup}).
On VGG-5 it exceeds same-recipe BP and the benchmark's BP-CE ($88.25$\% vs.\
$88.03$/$88.11$\%) but remains below BP-SE ($89.43$\%) and DPC-CN
($89.46$\%), indicating that the gap to a fully-tuned baseline has not
closed on shallow models.
To our knowledge, this is the first PC method to reach BP-level
accuracy on deep convolutional architectures under a controlled comparison.
\emph{Third}, ePC~\cite{goemaere2025epc}---which shares the error
reparametrisation but uses autograd $\JT$---degrades on deeper VGGs
($88.27 \to 86.81$\%, VGG-5$\to$9), while WF-Act-PC improves;
local transport, not the training recipe alone, drives the
depth-scaling advantage.

\subsection{CIFAR-100, Tiny-ImageNet, and Wall-Clock Time}
\label{sec:cifar100}

\paragraph{CIFAR-100.}
CIFAR-100 presents a harder test of inner-loop stability: with 100 output
classes, error norms are substantially larger than on CIFAR-10.
WF-Act-PC maintains monotonic depth scaling
(Table~\ref{tab:main}, right; Figure~\ref{fig:depth_cifar100}):
$65.64$\% (VGG-5), $68.75$\% (VGG-7), $70.32$\% (VGG-9), $71.07$\%
(ResNet-18).
Prior PC methods either collapse with depth
(iPC: $56.07 \to 29.45$\%, VGG-5$\to$ResNet-18) or degrade on deeper
VGGs (DPC-CN: $66.85 \to 60.59$\%; ePC: $63.39 \to 60.65$\%,
VGG-5$\to$9).
WF-Act-PC surpasses tuned BP on VGG-5 and ResNet-18 (by $+1.6$ and
$+0.9$\,pp).
On the two deepest CIFAR-100 VGG cells, however, tuned BP catches up: under
symmetric per-method LR tuning the methods tie on VGG-7
($68.76$ vs.\ $69.00$, $-0.24$\,pp) and tuned BP leads on VGG-9
($71.06$ vs.\ $71.90$, $-0.84$\,pp; small but statistically significant).
The depth-scaling property of WF-Act-PC itself is unaffected.
CIFAR-100 uses CutMix/Mixup with learning rate $2{\times}10^{-3}$
(VGG-5/7) or $1{\times}10^{-3}$ (VGG-9, ResNet-18); see supplementary.

\paragraph{Scaling beyond CIFAR.}
To test depth scaling on a harder benchmark, we evaluate on Tiny-ImageNet
(200 classes, $64{\times}64$) with the unchanged WF-Act-PC recipe.
VGG-5 reaches $46.0$\% and ResNet-18 reaches $61.1$\% top-1 (vs.\
per-architecture-tuned BP at $45.0$\% and $59.2$\%), a $+15.1$\,pp gain with
depth and $+1.0$ / $+1.9$\,pp over tuned BP at each depth.
These Tiny-ImageNet runs are \emph{single-seed} (following our
Tiny-ImageNet protocol) and should be read as a scaling data point rather
than a precise benchmark; the full per-architecture BP LR grid is in
Suppl.\ Mat.\ Sect.~H.

\paragraph{Wall-clock time.}
The $T{=}20$ inner loop incurs ${\approx}3{\times}$ overhead relative to
single-pass BP on a single A100 GPU: 22\,s vs.\ 7\,s per epoch on
VGG-5, 57\,s vs.\ 13\,s on ResNet-18.
This cost is inherent to iterative inference; total training for VGG-5 is
${\approx}18$\,min (50 epochs).
The inner loop also affords a clean accuracy--compute trade-off: even
$T{=}1$ (a single Gauss-Seidel sweep, $6.4$\,s/epoch, near BP speed) reaches
$87.64$\% on VGG-5/CIFAR-10, already above iPC ($85.51$\%; cf.\
Table~\ref{tab:ablation}).
Layer-parallel neuromorphic hardware would amortise the inner loop
across cores; a detailed timing breakdown appears in Suppl.\ Mat.\
Sect.~G.

\subsection{Depth Scaling Analysis}
\label{sec:depth_scaling}

Figure~\ref{fig:depth_scaling} visualises the central empirical finding:
WF-Act-PC is the only PC method whose accuracy \emph{increases} with depth
on both datasets.
Positive depth scaling is the expected consequence of correct $\JT$
transport: deeper networks have greater representational capacity, and
exact error routing lets WF-Act-PC exploit it---mirroring the behaviour
of backpropagation.

\begin{figure}[t]
  \centering
  \begin{subfigure}[t]{0.48\linewidth}
    \centering
    \includegraphics[width=\linewidth]{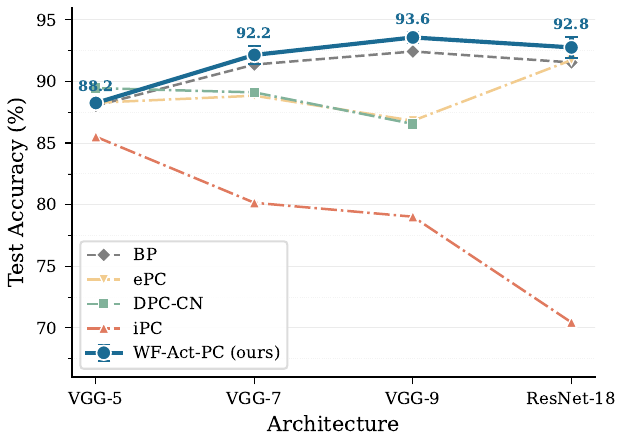}
    \caption{CIFAR-10}
    \label{fig:depth_cifar10}
  \end{subfigure}\hfill
  \begin{subfigure}[t]{0.48\linewidth}
    \centering
    \includegraphics[width=\linewidth]{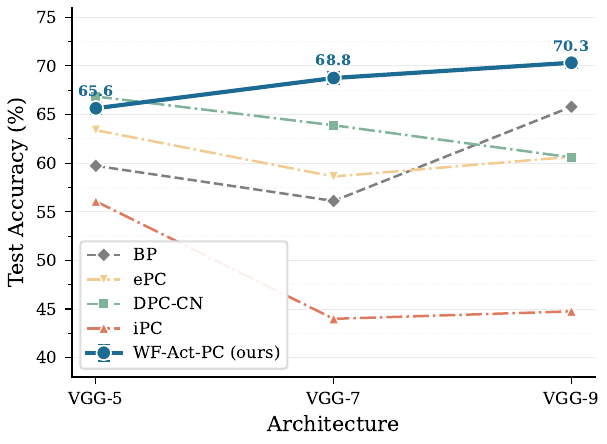}
    \caption{CIFAR-100}
    \label{fig:depth_cifar100}
  \end{subfigure}
  \caption{Depth scaling on CIFAR-10 and CIFAR-100.
  WF-Act-PC (solid blue) is the only PC method whose accuracy
  \emph{increases monotonically} with depth on both datasets.
  Error bars: $\pm$1 std over 5 seeds.}
  \label{fig:depth_scaling}
\end{figure}

\subsection{Ablation Study}
\label{sec:ablation}

Table~\ref{tab:ablation} ablates the key design choices on
VGG-5/CIFAR-10, using the same hyperparameters as
Table~\ref{tab:main} (mean\,$\pm$\,std, 5 seeds, 50 epochs).

\begin{table}[t]
\centering
\caption{Ablation study on VGG-5/CIFAR-10 (mean\,$\pm$\,std over 5 seeds,
same hyperparameters as Table~\ref{tab:main}).}
\label{tab:ablation}
\renewcommand{\arraystretch}{1.15}
\begin{tabular}{llcc}
  \toprule
  \textbf{Group} & \textbf{Variant}
    & \textbf{Acc (\%)} & \textbf{s/epoch} \\
  \midrule
  \multirow{4}{*}{Transport}
    & Random $B$ (FA-style) & $27.42_{\pm0.83}$ & 22.0 \\
    & $\WT$ only (no $\sigp$) & $81.62_{\pm0.47}$ & 20.1 \\
    & $\WT + \sigp$ (\textbf{WF-Act-PC}) & $\mathbf{88.25}_{\pm0.31}$ & 22.1 \\
    & Autograd $\JT$ (\texttt{jax.vjp}) & $89.01_{\pm0.24}$ & --- \\
  \midrule
  \multirow{5}{*}{Inner loop $T$}
    & $T = 1$ & $87.64_{\pm0.38}$ & 6.4 \\
    & $T = 5$ & $88.07_{\pm0.29}$ & 10.8 \\
    & $T = 10$ & $88.29_{\pm0.33}$ & 15.5 \\
    & $T = 20$ (default) & $\mathbf{88.25}_{\pm0.31}$ & 22.1 \\
    & $T = 40$ & $72.01_{\pm0.91}$ & 33.2 \\
  \midrule
  Stability & SNC disabled & $87.40_{\pm0.52}$ & 20.6 \\
  \bottomrule
\end{tabular}
\end{table}

\paragraph{Transport mechanism.}
Random $B$ (FA-style) collapses to $27.42$\%, consistent with the known
failure of FA on deep
CNNs~\cite{bartunov2018assessing,launay2020direct}: structured feedback
is necessary.
$\WT$ alone (without $\sigp$) yields $81.62$\%; adding the $\sigp$
correction recovers $88.25$\% ($+6.6$\,pp), establishing $\sigp$ as the
critical factor in Observation~\ref{thm:wf}.
Importantly, both ablations run the \emph{full} WF-Act-PC recipe (SNC,
top-clip, Cutout, label smoothing), so neither the stabilisation stack nor
the network topology alone explains the result: the
$\mathbf{s}{\odot}\sigp$ correction is essential, not incidental.
Replacing local transport with autograd $\JT$ gives $89.01$\%---a gap of
only $0.8$\,pp, within seed variance.
This narrow gap validates that the local formula faithfully implements
$\JT$; the value of local transport is
\emph{bio-plausibility}---eliminating the autograd
dependency---rather than additional accuracy.

\paragraph{Inner loop iterations.}
Accuracy improves from $T{=}1$ ($87.64$\%) through $T{=}10$
($88.29$\%) and plateaus at $T{=}20$ ($88.25$\%).
$T{=}40$ overshoots: RMSProp's adaptive denominator shrinks over many
iterations, amplifying updates and destabilising convergence.
The effective range is $T{=}10$--$20$, fewer than the $T{=}30$ used by
iPC~\cite{salvatori2024stable,pinchetti2025benchmarking}.
Even $T{=}1$---a single Gauss-Seidel sweep at $6.4$\,s/epoch---achieves
$87.64$\%, surpassing iPC ($85.51$\%) at $3.5{\times}$ lower cost.

\paragraph{Sensitivity to $\tau$ and $T$.}
Sweeping the SNC threshold $\tau \in \{2, 3, 5, \infty\}$ on
VGG-5/CIFAR-10 (single seed) gives $87.9 / 88.3 / 87.9 / 87.4$\%:
accuracy varies by $\le 0.9$\,pp across the range, including $\tau{=}\infty$
(no clipping), so WF-Act-PC is not brittle to this hyperparameter.
Likewise, extending the inner loop to $T{=}30$ holds accuracy
($88.2$\%, single seed); the collapse at $T{=}40$ (Table~\ref{tab:ablation})
is RMSProp overshoot, outside the recommended $T\in[10,20]$ range.
The method is therefore robust over $T\in[5,30]$ and $\tau\in[2,\infty)$.

\paragraph{Spectral norm clipping.}
Disabling SNC drops accuracy by only $0.85$\,pp ($87.40$\% vs.\
$88.25$\%), still exceeding iPC by $+1.9$\,pp.
SNC is a practical stabiliser, not a prerequisite: WF-Act-PC's core
advantage---local $\JT$ transport via $\sigp$---persists without it.
This matters for bio-plausibility, since SNC (power iteration) is the
one non-local component beyond the symmetric synapse.

\paragraph{BatchNorm correction ($s \neq 1$).}
The preceding ablation uses no-BN VGGs where $s{=}1$, exercising only the
$\sigp$ correction.
To validate the full Observation~\ref{thm:wf} including the
BN gain $s = \gamma/\!\sqrt{\mathrm{Var}+\epsilon}$, we add BatchNorm
to VGG-5 and selectively ablate each factor
(Table~\ref{tab:ablation_bn}).
Removing $s$ drops accuracy by $1.6$\,pp ($86.03$\% vs.\ $87.60$\%);
removing $\sigp$ costs $3.1$\,pp; removing both ($\WT$ only) costs
$3.9$\,pp.
Autograd $\JT$ reaches $89.82$\%, indicating that the residual
$2.2$\,pp gap to autograd is attributable to the MaxPool
nearest-neighbour approximation (the only non-exact component).
All gaps are large relative to the seed spread ($\le 0.3$\,pp), so both
correction factors in Observation~\ref{thm:wf} contribute meaningfully
when $s \neq 1$.

\begin{table}[t]
\centering
\caption{BN-VGG-5/CIFAR-10 ablation (mean\,$\pm$\,std over 5 seeds): effect of
each correction factor in Observation~\ref{thm:wf} when BatchNorm is present
($s \neq 1$).}
\label{tab:ablation_bn}
\renewcommand{\arraystretch}{1.15}
\begin{tabular}{lc}
  \toprule
  \textbf{Transport variant} & \textbf{Acc (\%)} \\
  \midrule
  $\WT$ only (no $\sigp$, no $s$)         & $83.74_{\pm0.14}$ \\
  $\WT \cdot \sigp$ (no $s$)              & $86.03_{\pm0.22}$ \\
  $\WT \cdot s$ (no $\sigp$)              & $84.48_{\pm0.20}$ \\
  $\WT \cdot \sigp \cdot s$ (\textbf{full}) & $87.60_{\pm0.23}$ \\
  Autograd $\JT$ (\texttt{jax.vjp})        & $89.82_{\pm0.27}$ \\
  \bottomrule
\end{tabular}
\end{table}

\paragraph{MaxPool unpooling.}
Our default uses nearest-neighbour unpooling---a local
approximation that avoids caching argmax positions
(Remark~\ref{rem:special}).
Comparing against exact argmax unpooling (which scatters each error to
the cached argmax position) on VGG-5/CIFAR-10 (mean\,$\pm$\,std over 5
seeds) yields $88.32_{\pm0.35}$\% vs.\ $89.58_{\pm0.21}$\%---a gap of only
$1.25$\,pp, validating the local approximation.
Switching to exact unpooling closes the small residual gap between local
transport and autograd $\JT$ ($89.01$\%, Table~\ref{tab:ablation}),
confirming that MaxPool, not the $\WT \sigp$ formula, is the dominant
source of the local-vs-autograd gap.

\paragraph{Inner-loop convergence.}
\label{sec:convergence_exp}
Figure~\ref{fig:convergence_gap} tracks the convergence gap $\Delta$
(Eq.~\ref{eq:gap}) during training with $T{=}20$.
$\Delta$ drops to ${\sim}10^{-4}$ within the first few epochs across all
architectures, indicating reliable convergence.
Figure~\ref{fig:training_curves} shows that deeper models consistently
reach higher final accuracy.

\begin{figure}[t]
  \centering
  \begin{subfigure}[t]{0.48\linewidth}
    \centering
    \includegraphics[width=0.78\linewidth]{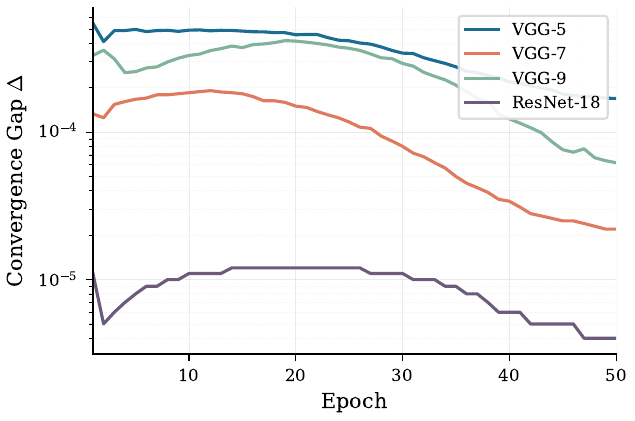}
    \caption{Convergence gap $\Delta$}
    \label{fig:convergence_gap}
  \end{subfigure}\hfill
  \begin{subfigure}[t]{0.48\linewidth}
    \centering
    \includegraphics[width=0.78\linewidth]{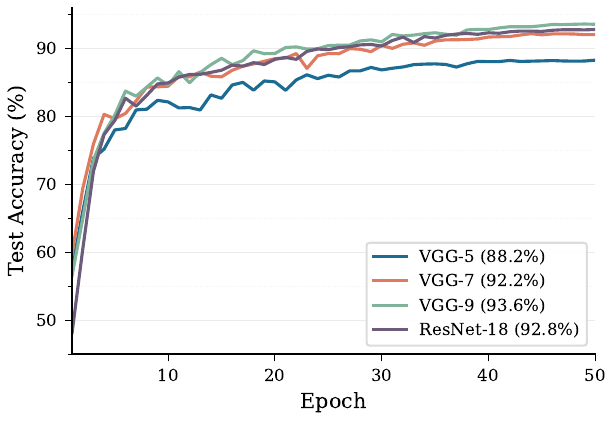}
    \caption{Test accuracy vs.\ epoch}
    \label{fig:training_curves}
  \end{subfigure}
  \caption{Training dynamics on CIFAR-10.
  \textbf{(a)}~Convergence gap $\Delta$ (Eq.~\ref{eq:gap}) drops to
  near-zero for all architectures, confirming $T{=}20$ iterations suffice.
  \textbf{(b)}~Deeper models reach higher final accuracy, consistent with
  positive depth scaling (Table~\ref{tab:main}).}
  \label{fig:training_dynamics}
\end{figure}

\section{Discussion and Conclusion}
\label{sec:conclusion}

\paragraph{Why WF-Act-PC matches or exceeds backpropagation.}
On VGG-7/9/ResNet-18 (CIFAR-10), WF-Act-PC outperforms BP~(tuned) by
$+0.8$/$+1.1$/$+1.2$\,pp.
We tentatively attribute this to the Gauss-Seidel inner loop acting as an
implicit regulariser, iteratively refining activations toward a local energy
minimum that may favour generalisation---a denoising effect absent from
single-pass BP.\footnote{This is a hypothesis to explain the gap, not a
claim established by controlled experiments.}
That even $T{=}1$ surpasses iPC ($87.64$\% vs.\ $85.51$\%,
Table~\ref{tab:ablation}) suggests the advantage originates in the local
$\JT$ transport formula rather than the number of inner-loop iterations.

\paragraph{Relation to ePC.}
As discussed in Sect.~\ref{sec:prelim}, WF-Act-PC shares the error-based
formulation and Gauss-Seidel sweep with ePC~\cite{goemaere2025epc}, paired
with an RMSProp inner loop.
The autograd-vs-local ablation (Table~\ref{tab:ablation}) quantifies the
decomposition: the shared training recipe accounts for the accuracy gains
over iPC, while Observation~\ref{thm:wf} eliminates the autograd
dependency at negligible accuracy cost ($88.25$\% local vs.\ $89.01$\%
autograd, within seed variance).
The observation's contribution is therefore not additional accuracy but
\emph{locality}: WF-Act-PC computes $\JT$ transport from local factors with
no autograd VJP and no performance loss relative to its autograd-dependent
counterpart.

\paragraph{Limitations.}
We restate the assumptions flagged in Sect.~\ref{sec:intro}, as none is
removed by this work.
(i)~\emph{Weight symmetry}: $\LT$ is assumed to mirror the feedforward
operator---shared with all PC methods and BP alike; combining WF-Act-PC with
learned approximate feedback~\cite{akrout2019deep} remains open.
(ii)~\emph{Soft SNC}: power iteration is not synapse-local, though it
operates on the slow learning timescale and disabling it costs only
$0.85$\,pp (Table~\ref{tab:ablation}).
(iii)~\emph{MaxPool}: the nearest-neighbour unpool is a local approximation
(${\approx}1.25$\,pp vs.\ exact argmax), so Eq.~(\ref{eq:wf_identity}) is exact
only for the non-pooling portion of the network.
Beyond these, the $T{=}20$ inner loop adds ${\approx}3{\times}$ wall-clock
overhead (Sect.~\ref{sec:cifar100}), inherent to iterative inference, and
our evaluation follows the PCX benchmark protocol on CIFAR-10/100 with a
Tiny-ImageNet scaling check---ImageNet-1K, deeper ResNets, and attention-based
architectures via frozen LayerNorm (Suppl.\ Mat.\ Sect.~B) are natural next
steps.
Finally, the picture against backpropagation is not uniformly favourable:
on VGG-9/CIFAR-100, symmetrically-tuned BP exceeds WF-Act-PC by $0.84$\,pp
(Sect.~\ref{sec:cifar100}), and the benchmark's own BP references
(BP-CE/BP-SE)---which use a stronger per-architecture recipe---exceed
WF-Act-PC on ResNet-18 (both datasets).

\paragraph{Conclusion.}
We showed that weight feedback combined with two locally computable
corrections---$\sigp(\mathrm{pre\text{-}act})$ and
$\mathbf{s} = \gamma/\sigma_{\mathrm{run}}$---computes the exact Jacobian
transpose for $\mathrm{Act}(\mathrm{Norm}(L(\cdot)))$ layers, up to the
weight-symmetry, soft-SNC, and MaxPool caveats above.
Neither the FA nor the PC community had put these pieces together.
Substituting this identity into PC yields WF-Act-PC, which surpasses iPC (by up to
$+22.3$\,pp) and reaches BP-level accuracy on deep convolutional
architectures under per-architecture tuning---the only PC method whose
accuracy improves with depth.
Weight feedback did not fail because $\WT$ was used; it failed because
$\sigp$ and $\gamma/\sigma$ were left out.

\subsubsection*{Acknowledgements.}
This work was supported in part by the Canada CIFAR AI Chairs Program, the
Alberta Machine Intelligence Institute (Amii), and the Natural Sciences and
Engineering Research Council of Canada (NSERC).

\subsubsection*{Disclosure of Interests.}
The authors declare that they have no competing interests.


\clearpage
\setcounter{section}{0}
\renewcommand{\thesection}{\Alph{section}}
\renewcommand{\thesubsection}{\thesection.\arabic{subsection}}
\begin{center}
  {\Large\bfseries Supplementary Material}\\[3pt]
  {\large Weight Feedback Computes the Jacobian Transpose Locally\\
   in Modern Deep Networks}
\end{center}
\bigskip

\renewcommand{\thesection}{\Alph{section}}

\section{Full Proof of Observation~1 with Dimension Tracking}
\label{app:proof}

We provide an explicit dimension-tracking proof for the Conv+BN+Act case, the
standard block in VGG and ResNet.

\paragraph{Setup.}
Let $W \in \mathbb{R}^{C_{\mathrm{out}} \times C_{\mathrm{in}} \times k \times k}$
be a convolutional kernel and $x \in \mathbb{R}^{C_{\mathrm{in}} \times H \times
S}$ be the input feature map ($S$ = spatial width).  The convolution $z = W \ast x$ yields
$z \in \mathbb{R}^{C_{\mathrm{out}} \times H' \times S'}$.  BatchNorm operates
channel-wise with frozen parameters $\{\gamma_c, \beta_c, \mu^{\mathrm{run}}_c,
\sigma^{\mathrm{run}}_c\}$; the activation $\mathrm{Act}$ is applied elementwise to
produce $h \in \mathbb{R}^{C_{\mathrm{out}} \times H' \times S'}$.

\paragraph{Step 1 (Linear operator).}
The adjoint of $L: x \mapsto W \ast x$ is $\LT: v \mapsto W \ast_{\mathrm{T}} v$
(transposed convolution, \texttt{conv\_transpose2d}), mapping
$\mathbb{R}^{C_{\mathrm{out}} \times H' \times S'} \to
\mathbb{R}^{C_{\mathrm{in}} \times H \times S}$.  This follows from
$\langle W \ast x,\; v \rangle = \langle x,\; W \ast_{\mathrm{T}} v \rangle$.

\paragraph{Step 2 (Frozen BatchNorm).}
With frozen statistics, for each channel $c$ and spatial position $(i, j)$:
\[
  \mathrm{BN}(z)_{c,i,j} = \frac{\gamma_c}{\sigma^{\mathrm{run}}_c}(z_{c,i,j}
  - \mu^{\mathrm{run}}_c) + \beta_c = s_c \cdot z_{c,i,j} + b_c,
  \quad s_c = \frac{\gamma_c}{\sigma^{\mathrm{run}}_c}.
\]
The Jacobian is $J_{\mathrm{BN}} = \mathrm{diag}(\mathbf{s})$ with $\mathbf{s}
\in \mathbb{R}^{C_{\mathrm{out}}}$ broadcast spatially, giving
$J_{\mathrm{BN}}^{\top} \cdot u = \mathbf{s} \odot u$ for $u \in
\mathbb{R}^{C_{\mathrm{out}} \times H' \times W'}$.

\paragraph{Step 3 (Pointwise activation).}
Let $p = \mathrm{BN}(z) \in \mathbb{R}^{C_{\mathrm{out}} \times H' \times W'}$.
Since $\mathrm{Act}$ is elementwise, $J_{\mathrm{Act}} = \mathrm{diag}(\sigp(p))$
and $J_{\mathrm{Act}}^{\top} \cdot v = \sigp(p) \odot v$.

\paragraph{Composition and dimension check.}
For $v \in \mathbb{R}^{C_{\mathrm{out}} \times H' \times W'}$:
\begin{align*}
  \JT \cdot v
  &= J_L^{\top} \cdot J_{\mathrm{BN}}^{\top} \cdot J_{\mathrm{Act}}^{\top} \cdot v
   = J_L^{\top} \cdot J_{\mathrm{BN}}^{\top} \cdot (\sigp(p) \odot v) \\
  &= J_L^{\top} \cdot (\mathbf{s} \odot \sigp(p) \odot v)
   = W \ast_{\mathrm{T}} (\mathbf{s} \odot \sigp(p) \odot v).
\end{align*}
The intermediate tensor $\mathbf{s} \odot \sigp(p) \odot v \in
\mathbb{R}^{C_{\mathrm{out}} \times H' \times S'}$ and
$W \ast_{\mathrm{T}} (\cdot) \in \mathbb{R}^{C_{\mathrm{in}} \times H \times
S}$, recovering the correct input-space dimension. $\square$

\section{Instantiations and Extension to Attention Layers}
\label{app:instantiations}

\subsection{Normalization Variants}

The identity $\JT \cdot v = \LT(\mathbf{s} \odot \sigp(\mathrm{pre\text{-}act})
\odot v)$ applies to any linear+frozen-norm+pointwise-activation layer.  Only the
scale $\mathbf{s}$ varies across architectures.

\begin{table}[htbp]
\centering
\caption{Per-channel scale $\mathbf{s}$ for common normalization choices.  The
transport formula is identical in all cases.}
\label{tab:instantiations}
\renewcommand{\arraystretch}{1.15}
\begin{tabular}{p{4.8cm}ll}
  \toprule
  \textbf{Layer type} & \textbf{Scale $\mathbf{s}$} & \textbf{Notes} \\
  \midrule
  Conv + BatchNorm (frozen) + Act
    & $\gamma_c/\sigma^{\mathrm{run}}_c$ & Per-channel; standard CNN \\
  Linear + BatchNorm (frozen) + Act
    & $\gamma_c/\sigma^{\mathrm{run}}_c$ & MLP with BN \\
  Conv + LayerNorm (frozen) + Act
    & $\gamma/\sigma_{\mathrm{LN}}$      & Over all non-batch dims \\
  Conv + GroupNorm (frozen) + Act
    & $\gamma_g/\sigma_G$                & Per-group \\
  Conv + Act (no norm)
    & $\mathbf{1}$                       & Exact; no special case needed \\
  Linear (no act, no norm)
    & $\mathbf{1}$                       & $\JT = \WT$ trivially \\
  \bottomrule
\end{tabular}
\end{table}

\subsection{Extension to Attention-Based Architectures}
\label{app:attention}

Transformer blocks use Multi-Head Self-Attention (MHSA) followed by an MLP,
each wrapped with LayerNorm:
\[
  z = x + \mathrm{MHSA}(\mathrm{LN}(x)), \qquad
  h = z + \mathrm{MLP}(\mathrm{LN}(z)).
\]
Observation~1 applies directly to the MLP sub-block:
$f_{\mathrm{MLP}}(u) = \mathrm{Act}(\mathrm{LN}(W_1 u))$, giving
$\JT v = W_1^{\top}(\mathbf{s}_{\mathrm{LN}} \odot \sigp \odot v)$
with $\mathbf{s}_{\mathrm{LN}} = \gamma / \sigma_{\mathrm{LN}}$.

The MHSA sub-block is more involved.  For a single attention head with
queries $Q = W_Q x$, keys $K = W_K x$, values $V = W_V x$, and attention
$A = \mathrm{softmax}(QK^{\top}/\sqrt{d})$, the output is $AV$.
The Jacobian of attention w.r.t.\ $x$ is \emph{not} a simple diagonal---it
involves $\partial A / \partial x$, which is data-dependent and dense.

\paragraph{Frozen-attention approximation.}
If the attention matrix $A$ is frozen during the PC inner loop (analogous to
freezing BatchNorm statistics), then
$J_{\mathrm{MHSA}} \approx A \cdot J_V = A \cdot W_V$, giving
$J_{\mathrm{MHSA}}^{\top} v \approx W_V^{\top} A^{\top} v$---again a product
of locally available factors ($W_V^{\top}$ is the symmetric synapse of the
value projection, and $A$ is cached from the forward pass).

This approximation is exact when $\partial A / \partial x = 0$, which holds
when attention weights are treated as fixed during inference---a natural
analogue of frozen normalization statistics.  Whether this approximation
suffices for competitive training accuracy on vision transformers is an
empirical question we leave for future work, but the theoretical framework
of Observation~1 extends naturally to this setting.

\paragraph{Practical considerations.}
For ViT-style architectures, the MLP sub-block accounts for $\approx$67\%
of the parameters (the two linear layers $W_1, W_2$), where Observation~1
applies exactly.  The attention sub-block requires the frozen-attention
approximation above.  Hybrid architectures (e.g., convolutional stems with
transformer blocks) can apply exact local transport in convolutional layers
and the frozen-attention approximation in transformer layers.

\section{CIFAR-100 Experimental Details}
\label{app:cifar100}

\subsection{Hyperparameter Configuration}

CIFAR-100 (100 classes) presents additional challenges compared to CIFAR-10:
the output error vector has larger norm (100-dimensional softmax vs.\
10-dimensional), which stresses inner-loop stability, and finer-grained
classes require stronger augmentation to prevent overfitting.
Table~\ref{tab:cifar100_hparams} lists the full hyperparameter configuration.

\begin{table}[htbp]
\centering
\caption{CIFAR-100 hyperparameters for WF-Act-PC.  Changes relative to the
CIFAR-10 configuration (Table~1 footnote in the main paper) are
\textbf{bolded}.}
\label{tab:cifar100_hparams}
\renewcommand{\arraystretch}{1.15}
\begin{tabular}{lcc}
  \toprule
  \textbf{Hyperparameter} & \textbf{VGG-5/7} & \textbf{VGG-9 / ResNet-18} \\
  \midrule
  Optimizer         & AdamW               & AdamW \\
  Learning rate $\eta_w$
    & $\mathbf{2 \times 10^{-3}}$ & $\mathbf{1 \times 10^{-3}}$ \\
  Weight decay      & $1 \times 10^{-4}$  & $1 \times 10^{-4}$ \\
  LR schedule       & Warmup-cosine (2 ep) & Warmup-cosine (2 ep) \\
  Batch size        & 128                 & 128 \\
  Epochs            & 50                  & 50 \\
  \midrule
  Inner-loop steps $T$ & 20               & 20 \\
  Inner-loop optimizer & RMSProp           & RMSProp \\
  Inference LR $\eta_{\varepsilon}$ & 0.1           & 0.1 \\
  RMSProp decay $\alpha_{\mathrm{rms}}$ & 0.2 & 0.2 \\
  Top-layer error clip & 5.0              & 5.0 \\
  Gradient clip (norm) & 1.0              & 1.0 \\
  \midrule
  Soft SNC $\tau$   & 3.0                 & 3.0 \\
  SNC power iterations & 20               & 20 \\
  \midrule
  \textbf{Label smoothing}
    & $\mathbf{0.1}$       & $\mathbf{0.1}$ \\
  \textbf{CutMix prob.}
    & $\mathbf{0.5}$       & $\mathbf{0.5}$ \\
  \textbf{Mixup $\alpha$}
    & $\mathbf{0.2}$       & $\mathbf{0.2}$ \\
  \textbf{Cutout patch size}
    & $\mathbf{10}$        & $\mathbf{10}$ \\
  RandomCrop(32, pad=4) & Yes             & Yes \\
  RandomHorizontalFlip  & Yes             & Yes \\
  \bottomrule
\end{tabular}
\end{table}

\subsection{Key Differences from CIFAR-10}

Four hyperparameters differ from the CIFAR-10 configuration:

\begin{enumerate}[leftmargin=*,topsep=2pt,itemsep=2pt]
\item \textbf{Learning rate.}
  VGG-5/7 use $\eta_w = 2 \times 10^{-3}$ (vs.\ $7 \times 10^{-4}$ for
  CIFAR-10); VGG-9 and ResNet-18 use $\eta_w = 1 \times 10^{-3}$.
  The higher learning rate compensates for the larger output dimension:
  with 100 classes, the cross-entropy gradient per class is $\sim$10$\times$
  smaller than with 10 classes, so a higher weight LR is needed to maintain
  the same effective update magnitude.
  Deeper models (VGG-9, ResNet-18) use a lower LR for stability.

\item \textbf{Label smoothing.}
  Increased from $0.05$ to $0.1$ to provide stronger regularization for the
  finer-grained 100-class task.

\item \textbf{CutMix / Mixup.}
  Added CutMix (probability 0.5) and Mixup ($\alpha = 0.2$) on top of
  Cutout.  These augmentations create mixed training targets, which has been
  shown to improve calibration and generalization on fine-grained
  tasks~\cite{yun2019cutmix,zhang2018mixup}.

\item \textbf{Cutout patch size.}
  Increased from $8$ to $10$ pixels.
\end{enumerate}

\noindent
All other hyperparameters---inner-loop configuration ($T$, $\eta_{\varepsilon}$,
$\alpha_{\mathrm{rms}}$), Soft SNC ($\tau$, power iterations), optimizer
betas, and gradient clipping---are identical to CIFAR-10.
The BP~(tuned) baseline uses the same augmentation recipe (CutMix, Mixup,
Cutout~10, label smoothing~0.1) but replaces the PC inner loop with
standard backpropagation, with the learning rate tuned per architecture
(grid in Sect.~\ref{app:tuned}).

\subsection{CIFAR-100 Results Discussion}

Table~1 (main paper) reports WF-Act-PC against per-architecture-tuned BP on
CIFAR-100:
\begin{center}
\renewcommand{\arraystretch}{1.1}
\begin{tabular}{lcccc}
  \toprule
  & \textbf{VGG-5} & \textbf{VGG-7} & \textbf{VGG-9} & \textbf{ResNet-18} \\
  \midrule
  WF-Act-PC      & 65.64 & 68.75 & 70.32 & 71.07 \\
  BP (tuned)     & 64.09 & 69.00 & 71.90 & 70.22 \\
  \midrule
  $\Delta$ (WF $-$ BP) & $+1.55$ & $-0.25$ & $-1.58$ & $+0.85$ \\
  \bottomrule
\end{tabular}
\end{center}

\noindent
Under \emph{symmetric} per-method LR tuning, WF-Act-PC reaches
$68.76_{\pm0.31}$ (VGG-7) and $71.06_{\pm0.18}$ (VGG-9), narrowing the gap to
tuned BP to $-0.24$ and $-0.84$\,pp respectively (Sect.~\ref{app:tuned}).
Three observations:

\paragraph{1. Positive depth scaling persists.}
WF-Act-PC improves from 65.64\% (VGG-5) to 71.07\% (ResNet-18), a
$+5.4$\,pp gain with depth.  In contrast, iPC degrades from 56.07\% to
29.45\% ($-26.6$\,pp), and DPC-CN degrades from 66.85\% to 60.59\%
($-6.3$\,pp).  Only ePC shows partial depth robustness (63.39\% to
69.47\%), but it relies on autograd $\JT$ and still underperforms
WF-Act-PC at every depth.

\paragraph{2. Comparison to tuned BP is mixed but favourable on the extremes.}
WF-Act-PC exceeds tuned BP on VGG-5 ($+1.55$\,pp) and ResNet-18
($+0.85$\,pp).  On the deeper VGG cells, tuned BP catches up: under
symmetric tuning the methods tie on VGG-7 ($-0.24$\,pp, within noise) and
tuned BP leads on VGG-9 ($-0.84$\,pp; small but statistically significant,
Welch $t \approx 5$, $p < 0.001$).  We report this rather than select
baselines.  We note, but do not rely on, the possibility that the
Gauss-Seidel inner loop acts as an implicit regulariser more helpful on
easier/shallower settings; this is a preliminary hypothesis, not a
controlled finding.

\paragraph{3. Comparison to the benchmark's BP references.}
WF-Act-PC trails the squared-error BP-SE only at the extremes---VGG-5
($65.64$ vs.\ $66.28$, $-0.6$\,pp) and ResNet-18 ($71.07$ vs.\ $71.89$,
$-0.8$\,pp)---while \emph{exceeding} it on the intermediate depths VGG-7
($68.75$ vs.\ $65.36$) and VGG-9 ($70.32$ vs.\ $65.51$), where BP-SE
plateaus but WF-Act-PC keeps scaling.  Against the cross-entropy BP-CE,
WF-Act-PC leads at every depth except ResNet-18 ($71.07$ vs.\ $72.32$).
The remaining gaps at the extremes are within roughly a point, and the
positive depth-scaling trend is unaffected.

\section{Complete Hyperparameter Table}
\label{app:hparams}

For reproducibility, Table~\ref{tab:full_hparams} lists the complete
hyperparameter configuration for all experiments reported in the main
paper.

\begin{table}[htbp]
\centering
\caption{Full hyperparameter configuration for all WF-Act-PC experiments.
All CIFAR-10 models share a single configuration (no per-model tuning).
CIFAR-100 differs only in the bolded entries.}
\label{tab:full_hparams}
\renewcommand{\arraystretch}{1.15}
\resizebox{\textwidth}{!}{%
\begin{tabular}{lcc}
  \toprule
  \textbf{Hyperparameter} & \textbf{CIFAR-10 (all models)} & \textbf{CIFAR-100} \\
  \midrule
  \multicolumn{3}{l}{\textit{Weight learning}} \\
  \quad Optimizer          & AdamW ($\beta_1{=}0.9, \beta_2{=}0.999$)
                           & AdamW ($\beta_1{=}0.9, \beta_2{=}0.999$) \\
  \quad Learning rate $\eta_w$
    & $7 \times 10^{-4}$
    & $\mathbf{2 \times 10^{-3}}$ (VGG-5/7) /
      $\mathbf{1 \times 10^{-3}}$ (VGG-9, RN-18) \\
  \quad Weight decay       & $1 \times 10^{-4}$ & $1 \times 10^{-4}$ \\
  \quad LR schedule        & Warmup-cosine: 2 ep warmup, decay to $10^{-6}$
                           & Same \\
  \quad Gradient clip (max norm) & 1.0 & 1.0 \\
  \midrule
  \multicolumn{3}{l}{\textit{PC inner loop (inference)}} \\
  \quad Inner-loop steps $T$ & 20 & 20 \\
  \quad Optimizer            & RMSProp & RMSProp \\
  \quad Inference LR $\eta_{\varepsilon}$ & 0.1 & 0.1 \\
  \quad RMSProp decay $\alpha_{\mathrm{rms}}$ & 0.2 & 0.2 \\
  \quad Top-layer error clip $c$ & 5.0 & 5.0 \\
  \quad Update order         & Gauss-Seidel (top $\to$ bottom) & Same \\
  \midrule
  \multicolumn{3}{l}{\textit{Soft Spectral Norm Clipping}} \\
  \quad Threshold $\tau$     & 3.0 & 3.0 \\
  \quad Power iterations     & 20  & 20 \\
  \midrule
  \multicolumn{3}{l}{\textit{Data and augmentation}} \\
  \quad Batch size           & 128 & 128 \\
  \quad Epochs               & 50  & 50 \\
  \quad RandomCrop(32, pad=4) & Yes & Yes \\
  \quad RandomHorizontalFlip(0.5) & Yes & Yes \\
  \quad Cutout patch size    & 8   & $\mathbf{10}$ \\
  \quad Label smoothing      & 0.05 & $\mathbf{0.1}$ \\
  \quad CutMix (prob.)       & ---  & $\mathbf{0.5}$ \\
  \quad Mixup ($\alpha$)     & ---  & $\mathbf{0.2}$ \\
  \midrule
  \multicolumn{3}{l}{\textit{Loss}} \\
  \quad Output loss $\mathcal{L}$ & Cross-entropy & Cross-entropy \\
  \quad Inner energy         & $\frac{1}{2}\sum_l \|\eps_l\|^2$ & Same \\
  \midrule
  \multicolumn{3}{l}{\textit{Implementation}} \\
  \quad Framework            & JAX 0.4+ (CUDA 12) & Same \\
  \quad Matmul precision     & float32 & float32 \\
  \quad Hardware             & 1$\times$ NVIDIA A100 (40 GB) & Same \\
  \quad Seeds                & 5 (42, 43, 44, 45, 46) & Same \\
  \bottomrule
\end{tabular}}
\end{table}

\section{Architecture Details}
\label{app:arch}

Table~\ref{tab:arch} specifies the exact architecture configurations.

\begin{table}[htbp]
\centering
\caption{Architecture configurations for all models.  VGG variants follow
the PCX benchmark~\cite{pinchetti2025benchmarking} (Conv+GELU+MaxPool,
\emph{no} BatchNorm).  ResNet-18 uses Conv+GELU with residual connections,
also \emph{without} BatchNorm.}
\label{tab:arch}
\renewcommand{\arraystretch}{1.15}
\resizebox{\textwidth}{!}{%
\begin{tabular}{lp{9cm}rc}
  \toprule
  \textbf{Model} & \textbf{Layer configuration} & \textbf{Params}
    & \textbf{BN?} \\
  \midrule
  VGG-5
    & Conv(3$\to$128)+GELU+Pool,
      Conv(128$\to$256)+GELU+Pool,
      Conv(256$\to$512)+GELU+Pool,
      Conv(512$\to$512)+GELU+Pool,
      FC(2048$\to C$)
    & 3.9M & No \\
  \midrule
  VGG-7
    & \{Conv(3$\to$128)+GELU, Conv(128$\to$128)+GELU+Pool\},
      \{Conv(128$\to$256)+GELU, Conv(256$\to$256)+GELU+Pool\},
      \{Conv(256$\to$512)+GELU, Conv(512$\to$512)+GELU+Pool\},
      FC(8192$\to C$)
    & 4.7M & No \\
  \midrule
  VGG-9
    & \{Conv(3$\to$128)+GELU, Conv(128$\to$128)+GELU+Pool\},
      \{Conv(128$\to$256)+GELU, Conv(256$\to$256)+GELU+Pool\},
      \{Conv(256$\to$512)+GELU, Conv(512$\to$512)+GELU+Pool\},
      \{Conv(512$\to$512)+GELU, Conv(512$\to$512)+GELU+Pool\},
      FC(2048$\to C$)
    & 9.3M & No \\
  \midrule
  ResNet-18
    & InitConv(3$\to$64, stride=1), \\
    & Stage 1: 2$\times$ BasicBlock(64$\to$64), \\
    & Stage 2: BasicBlock(64$\to$128, stride=2) + BasicBlock(128$\to$128), \\
    & Stage 3: BasicBlock(128$\to$256, stride=2) + BasicBlock(256$\to$256), \\
    & Stage 4: BasicBlock(256$\to$512, stride=2) + BasicBlock(512$\to$512), \\
    & GlobalAvgPool, FC(512$\to C$)
    & 11.2M & No \\
  \bottomrule
\end{tabular}}
\begin{flushleft}
  \footnotesize
  $C = 10$ for CIFAR-10, $C = 100$ for CIFAR-100.
  All convolutions use $3{\times}3$ kernels with padding 1.
  MaxPool uses $2{\times}2$ windows with stride 2.
  BasicBlock $=$ GELU(Conv\,$+$\,GELU\,$+$\,Conv\,$+$\,skip), no BatchNorm,
  with a $1{\times}1$ downsample convolution when stride $> 1$.
\end{flushleft}
\end{table}

\section{MaxPool Approximation Details}
\label{app:maxpool}

As noted in Remark~2 of the main paper, MaxPool is the one operation in VGG
architectures whose exact $\JT$ is not trivially local: the true Jacobian
transpose routes the gradient only through the argmax position within each
pool window, requiring cached argmax indices from the forward pass.

Our implementation replaces the exact MaxPool $\JT$ with nearest-neighbour
upsampling: each error value is repeated uniformly over the $2{\times}2$
pool window.  This avoids caching argmax positions and depends only on the
pool window size (a fixed structural constant).

A direct ablation on VGG-5/CIFAR-10 (mean\,$\pm$\,std over 5 seeds; same
hyperparameters as Table~2 in the main paper) quantifies the cost:
\begin{itemize}[nosep,leftmargin=*]
  \item \textbf{Nearest-neighbour unpooling} (default, local):
    $88.32_{\pm0.35}$\%.
  \item \textbf{Exact argmax unpooling} (caches pool indices):
    $89.58_{\pm0.21}$\%.
\end{itemize}
The gap of $1.25$\,pp validates the local approximation.
Switching to exact unpooling closes the small residual gap between local
transport ($88.25$\%) and autograd $\JT$ ($89.01$\%) in Table~2, confirming
that MaxPool---not the $\WT \sigp$ formula for linear+activation
layers---is the dominant source of the local-vs-autograd gap.

\section{Wall-Clock Timing Breakdown}
\label{app:timing}

Table~\ref{tab:timing_breakdown} breaks down the per-step wall-clock time
for VGG-5 on CIFAR-10 (single A100 GPU).

\begin{table}[htbp]
\centering
\caption{Wall-clock timing breakdown for one training step on VGG-5/CIFAR-10
(batch size 128, single A100).}
\label{tab:timing_breakdown}
\renewcommand{\arraystretch}{1.1}
\begin{tabular}{lrl}
  \toprule
  \textbf{Phase} & \textbf{Time (ms)} & \textbf{Notes} \\
  \midrule
  Forward pass + caching         & 3.2 & Cache $\mathrm{pre\text{-}act}$,
    $\mathbf{s}$ \\
  Inner loop ($T{=}20$ sweeps)   & 42.1 & Gauss-Seidel + RMSProp \\
  Weight update (AdamW)          & 5.8 & Local gradients + optimizer step \\
  Soft SNC (power iteration)     & 5.4 & 20 power iterations per layer \\
  Data loading + misc.           & 0.4 & Overlapped with compute \\
  \midrule
  \textbf{Total}                 & \textbf{56.9} & $\approx$22\,s/epoch \\
  \midrule
  BP equivalent (forward + backward) & 18.2 & $\approx$7.1\,s/epoch \\
  \bottomrule
\end{tabular}
\end{table}

\noindent
The inner loop dominates ($\approx$74\% of total time).  This cost is
inherent to iterative PC inference and scales linearly with $T$.
Notably, each Gauss-Seidel sweep is embarrassingly parallel across layers
in the error-based formulation (all $\eps_l$ receive gradient signals from
iteration 1), suggesting that dedicated hardware with layer-parallel
execution could reduce the inner-loop latency significantly.

\section{Tuned-BP Learning-Rate Grids and Tiny-ImageNet}
\label{app:tuned}

This section gives the per-architecture learning-rate search behind the
``BP (tuned)'' column of Table~1 and the Tiny-ImageNet results of
Sect.~5.3, and documents the \emph{symmetric} tuning of WF-Act-PC on the two
CIFAR-100 cells where tuned BP is competitive.

\paragraph{Protocol.}
BP (tuned) uses the same augmentation recipe as WF-Act-PC (Cutout / CutMix /
Mixup, label smoothing, cosine warmup-decay, weight decay $10^{-4}$) and
searches the learning rate over
$\{3{\times}10^{-4}, 7{\times}10^{-4}, 1{\times}10^{-3}, 3{\times}10^{-3}\}$.
Because WF-Act-PC and BP have markedly different optimal learning rates
(e.g.\ on VGG-7/CIFAR-100, WF-Act-PC is best near $2{\times}10^{-3}$ while BP
is best near $3{\times}10^{-4}$), comparing each method at its own best LR is
the fair protocol; comparing both at a single shared LR flatters whichever
method that LR happens to suit.

\begin{table}[htbp]
\centering
\caption{BP (tuned) learning-rate grid (test accuracy \%, single seed unless
noted). \textbf{Best} per row is the value reported as BP (tuned) in Table~1;
tie-risk cells are then re-run for 5 seeds (rightmost column).}
\label{tab:tuned_grid}
\renewcommand{\arraystretch}{1.15}
\begin{tabular}{llcccccl}
  \toprule
  \textbf{Dataset} & \textbf{Arch}
    & $3{\times}10^{-4}$ & $7{\times}10^{-4}$
    & $1{\times}10^{-3}$ & $3{\times}10^{-3}$
    & \textbf{Best} & \textbf{5-seed} \\
  \midrule
  \multirow{3}{*}{CIFAR-10}
    & VGG-7    & 91.46 & 91.37 & 91.09 & 85.46 & 91.46 & $91.39_{\pm0.05}$ \\
    & VGG-9    & 92.15 & 92.43 & 92.21 & 88.60 & 92.43 & $92.43_{\pm0.29}$ \\
    & ResNet-18 & 91.02 & 91.54 & 91.08 & 85.97 & 91.54 & $91.54_{\pm0.36}$ \\
  \midrule
  \multirow{4}{*}{CIFAR-100}
    & VGG-5    & 64.12 & ---   & 63.23 & 56.32 & 64.12 & $64.09_{\pm0.14}$ \\
    & VGG-7    & 69.03 & 56.11 & 66.02 & 50.92 & 69.03 & $69.00_{\pm0.30}$ \\
    & VGG-9    & 71.93 & ---   & 66.22 & 41.79 & 71.93 & $71.90_{\pm0.32}$ \\
    & ResNet-18 & 70.10 & ---  & 66.76 & 48.06 & 70.10 & $70.22_{\pm0.27}$ \\
  \bottomrule
\end{tabular}
\end{table}

\paragraph{Symmetric tuning of WF-Act-PC (CIFAR-100, VGG-7/9).}
The main paper uses $\eta_w \in \{2,1\}{\times}10^{-3}$ for WF-Act-PC on
CIFAR-100.  To match the per-method tuning applied to BP, we additionally
sweep WF-Act-PC on the two cells where tuned BP is competitive
(VGG-7, VGG-9):

\begin{center}
\renewcommand{\arraystretch}{1.1}
\begin{tabular}{lcccc}
  \toprule
  \textbf{Cell} & $3{\times}10^{-4}$ & $5{\times}10^{-4}$
    & $7{\times}10^{-4}$ & \textbf{5-seed @\,best} \\
  \midrule
  VGG-7 / CIFAR-100 & 67.93 & 68.68 & 68.89 & $68.76_{\pm0.31}$ \\
  VGG-9 / CIFAR-100 & 69.90 & 71.20 & 71.24 & $71.06_{\pm0.18}$ \\
  \bottomrule
\end{tabular}
\end{center}

\noindent
At its own best LR ($7{\times}10^{-4}$), WF-Act-PC reaches
$68.76_{\pm0.31}$ (VGG-7) and $71.06_{\pm0.18}$ (VGG-9).  Against tuned BP
($69.00_{\pm0.30}$ and $71.90_{\pm0.32}$), this is a statistical tie on
VGG-7 ($-0.24$\,pp) and a small but significant BP lead on VGG-9
($-0.84$\,pp, Welch $t \approx 5.2$, $p < 0.001$).  We report these honestly;
WF-Act-PC's monotonic depth-scaling property is independent of any BP tuning.

\paragraph{Tiny-ImageNet.}
Tiny-ImageNet (200 classes, $64{\times}64$) uses the unchanged WF-Act-PC
recipe (single seed, following our protocol of one long Tiny-ImageNet run
per architecture); BP is tuned per architecture as above.
Table~\ref{tab:tin_grid} gives the numbers behind the Tiny-ImageNet results
reported in Sect.~5.3 of the main paper.

\begin{table}[htbp]
\centering
\caption{Tiny-ImageNet top-1 accuracy (\%), 50 epochs. WF-Act-PC single seed;
BP (tuned) best of the LR grid (VGG-5: 5 seeds at the best LR; ResNet-18:
single seed per grid point).}
\label{tab:tin_grid}
\renewcommand{\arraystretch}{1.15}
\begin{tabular}{lccccl}
  \toprule
  \textbf{Arch} & \textbf{WF-Act-PC}
    & \multicolumn{3}{c}{\textbf{BP (tuned) LR grid}} & \textbf{BP best} \\
  \cmidrule(lr){3-5}
   & & $3{\times}10^{-4}$ & $7{\times}10^{-4}$ & $1{\times}10^{-3}$ & \\
  \midrule
  VGG-5     & $\mathbf{46.0}$ & 44.5 & 43.1 & 41.7 & $45.04_{\pm0.40}$ \\
  ResNet-18 & $\mathbf{61.1}$ & 59.2 & 55.7 & 52.1 & 59.18 \\
  \bottomrule
\end{tabular}
\end{table}

\noindent
WF-Act-PC exceeds tuned BP at both depths ($+1.0$ / $+1.9$\,pp) and improves
by $+15.1$\,pp from VGG-5 to ResNet-18, confirming positive depth scaling on
a benchmark harder than CIFAR.  These runs are single-seed and should be read
as a scaling data point rather than a precise benchmark.

\bibliographystyle{splncs04}
\bibliography{egbib}

\end{document}